\documentclass{article} 
\usepackage{iclr2024_conference,times}


\usepackage[utf8]{inputenc} 
\usepackage[T1]{fontenc}    
\usepackage{hyperref}       
\usepackage{url}            
\usepackage{booktabs}       
\usepackage{amsfonts}       
\usepackage{nicefrac}       
\usepackage{microtype}      
\usepackage{xcolor}         

\usepackage{amsmath}
\usepackage{amssymb}
\usepackage{mathtools}
\usepackage{amsthm}
\usepackage{physics}
\usepackage{xcolor}
\usepackage{thmtools}
\usepackage{thm-restate}
\usepackage{bbm}
\usepackage{bm}
\usepackage{breqn}
\usepackage{caption}
\usepackage{subcaption}
\usepackage{multirow}

\definecolor{mydarkblue}{rgb}{0,0.08,0.45}
\hypersetup{ %
    colorlinks=true,
    linkcolor=mydarkblue,
    citecolor=mydarkblue,
    filecolor=mydarkblue,
    urlcolor=mydarkblue,
}

\theoremstyle{definition}
\newtheorem{definition}{Definition}[section]

\newtheorem{remark}[definition]{Remark}

\DeclareMathOperator*{\argmin}{argmin}
\DeclareMathOperator*{\arginf}{arginf}
\DeclareMathOperator*{\argmax}{argmax}

\newcommand{\Gauss}{\mathcal{N}}
\newcommand{\B}{\mathcal{B}}
\newcommand{\D}{\mathcal{D}}
\newcommand{\G}{\mathcal{G}}
\newcommand{\M}{\mathcal{M}}

\newcommand{\Sp}{\mathcal{S}}
\newcommand{\X}{\mathcal{X}}
\newcommand{\E}{\mathbb{E}}

\newcommand{\Prob}{\mathbb{P}}
\newcommand{\R}{\mathbb{R}}
\newcommand{\Ind}{\mathbbm{1}}
\newcommand{\SM}{\phi}

\newcommand{\Id}{\mathrm{Id}}
\newcommand{\poly}{\mathrm{poly}}

\newcommand{\inrprod}[2]{\left\langle #1, #2 \right\rangle}

\newcommand{\mix}{\mathrm{mix}}
\newcommand{\KL}{\mathrm{KL}}
\newcommand{\supp}{\mathrm{supp}}
\newcommand{\conv}{\mathrm{conv}}
\newcommand{\ECE}{\mathrm{ECE}}
\newcommand\numberthis{\addtocounter{equation}{1}\tag{\theequation}}
\numberwithin{equation}{section}

\title{On the Limitations of Temperature Scaling for Distributions with Overlaps}


\author{%
  Muthu Chidambaram \\
  Department of Computer Science\\
  Duke University \\
  \texttt{muthu@cs.duke.edu} \\
  \And
  Rong Ge \\
  Department of Computer Science \\
  Duke University \\
  \texttt{rongge@cs.duke.edu}
}

%

\iclrfinalcopy 
\begin{document}

\maketitle

\begin{abstract}
Despite the impressive generalization capabilities of deep neural networks, they have been repeatedly shown to be overconfident when they are wrong. Fixing this issue is known as model calibration, and has consequently received much attention in the form of modified training schemes and post-training calibration procedures such as temperature scaling. While temperature scaling is frequently used because of its simplicity, it is often outperformed by modified training schemes. In this work, we identify a specific bottleneck for the performance of temperature scaling.  We show that for empirical risk minimizers for a general set of distributions in which the supports of classes have overlaps, the performance of temperature scaling degrades with the amount of overlap between classes, and asymptotically becomes no better than random when there are a large number of classes. On the other hand, we prove that optimizing a modified form of the empirical risk induced by the Mixup data augmentation technique can in fact lead to reasonably good calibration performance, showing that training-time calibration may be necessary in some situations. We also verify that our theoretical results reflect practice by showing that Mixup significantly outperforms empirical risk minimization (with respect to multiple calibration metrics) on image classification benchmarks with class overlaps introduced in the form of label noise.
\end{abstract}

\section{Introduction}
The past decade has seen a rapid increase in the prevalence of deep learning models across a variety of applications, in large part due to their impressive predictive accuracy on unseen test data. However, as these models begin to be applied to critical applications such as predicting credit risk \citep{clements2020sequential}, diagnosing medical conditions \citep{esteva2017dermatologist, esteva2021deep, elmarakeby2021biologically}, and autonomous driving \citep{bojarski2016end, grigorescu2020survey}, it is crucial that the models are not only accurate but also predict with appropriate levels of uncertainty.

In the context of classification, a model with appropriate uncertainty would be correct with a probability that is similar to its predicted confidence \--- for example, among samples on which the model predicts a class with 90\% confidence, around 90\% of them should indeed be the predicted class (a more formal definition is provided in Equation~\eqref{eqn:calibration}). 

As a concrete (but highly simplified) example, consider the case of applying a deep learning model for predicting whether a patient has a life-threatening illness \citep{jiang2012calibrating}. In this situation, suppose our model classifies the patient as not having the illness but does so with high confidence. A physician using this model for their assessments may then incorrectly diagnose the patient (with potentially grave consequences). On the other hand, if the model had lower confidence in this incorrect prediction, a physician may be more likely to do further assessments.

Obtaining such models with good predictive uncertainty is the problem of \textit{model calibration}, and has seen a flurry of recent work in the context of training deep learning models \citep{guocal2017, thulasidasan2019mixup, ovadia2019, wen2020batchensemble, revisitcal21}. In particular, \citet{guocal2017} showed that the simple technique of \textit{temperature scaling} \--- which introduces only a single parameter to ``dampen'' the logits of a trained model (defined formally in Section \ref{calibration}) \--- is a powerful procedure for calibrating deep learning models.


Temperature scaling falls under the class of post-training calibration techniques, which are attractive due to the fact that they can be applied to a black box model without requiring any kind of retraining. However, several empirical works have shown that temperature scaling alone can be outperformed by training-time modifications such as data augmentation \citep{thulasidasan2019mixup, muller2020does} and regularized loss functions \citep{mmce2018, mukhoti2020calibrating}. 

In this work, we try to understand these empirical observations theoretically by addressing the following question:
\begin{quote}
\em
Can we identify reasonable conditions on the data distribution for which temperature scaling provably fails to achieve good calibration, but training-time modifications still succeed?
\end{quote}

\subsection{Main Contributions and Outline}
We answer this question in the affirmative, showing that temperature scaling cannot handle data distributions with certain class overlap properties, while training-time modifications can. We first define the notions of calibration and temperature scaling relevant to our work in Section \ref{calibration}, and then motivate conditions on the models we consider in our theory in Section \ref{erm}. Namely, we focus on models that interpolate the training data (i.e. achieve zero training error) and satisfy a local Lipschitz-like condition. We also introduce the Mixup \citep{mixup} data augmentation, as well as a modification of it  necessary for our theoretical results in Section \ref{mixup}.

After establishing the necessary background, we show in our main results of Section \ref{maintheory} that for classification tasks in which the supports of different classes have overlaps, the performance of temperature scaling degrades with the amount of overlap between classes, while the performance of our modified Mixup procedure remains robust to such overlaps. The key idea behind our theory is that \textit{training-time calibration techniques can significantly constrain model behavior in regions away from the training data points}, and this notion can be extended beyond our Mixup analysis to other augmentation techniques.

Lastly, in Section \ref{experiments} we show that our theoretical results accurately reflect practice by considering both synthetic data and image classification benchmarks. In Section \ref{synthetic} we show that the performance of empirical risk minimization (ERM) combined with temperature scaling quickly degrades on a simple 2-class high-dimensional Gaussian dataset as we increase the overlap between the two Gaussians. Similarly, in Section \ref{imageclass} we show that the same phenomenon occurs on standard datasets when we increase the amount of label noise in the data (thereby increasing class overlaps). 

\subsection{Related Work}
\textbf{Calibration in deep learning.} The calibration of deep learning models has received significant attention in recent years, largely stemming from the work of \citet{guocal2017} which empirically showed that modern, overparameterized models can have poor predictive uncertainty. Follow-up works \citep{thulasidasan2019mixup, rahaman2021uncertainty, wen2021combining} have supported these findings, although the recent work of \citet{revisitcal21} showed that current state-of-the-art architectures can be better calibrated than the previous generation of models. 

\textbf{Methods for improving calibration.} Many different methods have been proposed for improving calibration, including: logit rescaling \citep{guocal2017}, data augmentation \citep{thulasidasan2019mixup, muller2020does}, ensembling \citep{lakshminarayanan2017simple, wen2020batchensemble}, and modified loss functions \citep{mmce2018, focalloss2021}. The logit rescaling methods, namely temperature scaling and its variants \citep{beyondtemp, localtemp}, constitute perhaps the most applied calibration techniques, since they can be used on any trained model with the introduction of only a few extra parameters (see Section \ref{calibration}). However, we show in this work that this kind of post-training calibration can be insufficient for some data distributions, which can in fact require data augmentation/modified loss functions to achieve good calibration. We focus particularly on Mixup \citep{mixup} data augmentation, whose theoretical benefits for calibration were recently studied by \citet{mixcal} in the context of linear models and Gaussian data. Our results provide a complementary perspective to this prior work, as we address a much broader class of models and a different class of data distributions.

\section{Theoretical Preliminaries}\label{prelim}
\textbf{Notation.} We use $[k]$ to denote $\{1, 2, ..., k\}$ for a positive integer $k$. We consider $k$-class classification and use $\X$ to denote a dataset of $N$ points $(x_i, y_i)$ sampled from a distribution $\pi(X, Y)$ whose support $\supp(\pi)$ is contained in $\R^d \times [k]$. We use $\pi_X$ and $\pi_Y$ to denote the respective marginal distributions of $\pi$, and use $\pi_y$ to denote the conditional distribution $\pi(X \mid Y = y)$. We use $d(A, B)$ to denote the Euclidean distance between two sets $A, B \subset \R^d$, $d_{\KL}(\pi_1, \pi_2)$ to denote the KL divergence between two distributions $\pi_1$ and $\pi_2$, and $\mu_d$ for the Lebesgue measure on $\R^d$. For a function $g: \R^d \to \R^k$, we use $g^i$ to denote the $i^{\text{th}}$ coordinate function of $g$. Lastly, we use $\SM(\cdot)$ to denote the softmax function, i.e. $\SM^i(g(x)) = \exp(g^i(x))/\sum_{j \in [k]} \exp(g^j(x))$. In everything that follows, we assume both $N$ and $k$ are sufficiently large, and that $N = \Omega(\poly(k))$ for some large degree polynomial $\poly(k)$.

\textbf{Calibration.}\label{calibration} In a classification setting, we say that a model $g$ is \textit{calibrated} with respect to the ground-truth probability distribution $\pi$ if the following conditional probability 
condition holds:
\begin{align*}
    \Prob(Y \mid \SM(g(X))) = \SM(g(X)) \numberthis \label{eqn:calibration}
\end{align*}

Equation \eqref{eqn:calibration} captures the earlier mentioned intuition that, when our model predicts the probability distribution $\SM(g(X))$ over the classes $[k]$, the true probability distribution for the classes is also $\SM(g(X))$. It is straightforward to translate Equation \eqref{eqn:calibration} into a notion of miscalibration by considering the expectation of the norm of the difference between the left and right-hand sides. In practice, however, it is common in multi-class classification to focus only on calibration with respect to the top predicted class $\argmax_y \SM^y(g(X))$. This leads to the top-class expected calibration error (ECE), which we define simply as ECE below:
\begin{align*}
    \ECE(g) = \E_{(X, Y) \sim \pi} \left[\abs{\E[Y \in \argmax_{y \in [k]} \SM^y(g(X)) \mid \max_{y \in [k]} \SM^y(g(X)) = p^*] - p^*}\right] \numberthis \label{eqn:ece}
\end{align*}

Top-class ECE is only one of several common notions used to measure calibration, and is known to have several theoretical and empirical drawbacks \citep{błasiok2023unifying}. In our theoretical work, we opt to instead work with the expected KL divergence $\E_{X \sim \pi_X}[d_{\KL}(\pi(Y \mid X), \cdot)]$ as our notion of calibration error. 
Note that such a notion is difficult to estimate in reality as we do not know $\pi(Y \mid X)$. However, we will consider cases where the post-training calibration procedure has access to $\pi(Y \mid X)$.
In such cases expected KL divergence is a more accurate characterization as minimizing it implies the full calibration condition of Equation \eqref{eqn:calibration}, while minimizing Equation \eqref{eqn:ece} does not (we will only be calibrated with respect to the top class).

With a notion of miscalibration in hand, we consider methods to improve the calibration of a trained model $g$. One of the most popular (and simplest) approaches is \textit{temperature scaling} \citep{guocal2017}, which consists of introducing a single parameter $T$ that is used to scale the outputs of $g$. The value of $T$ is obtained by optimizing the negative log-likelihood on a calibration dataset $\X_{\mathrm{cal}}$:
\begin{align*}
    T = \argmin_{\hat{T} \in (0, \infty)} -\frac{1}{\abs{\X_{\mathrm{cal}}}} \sum_{(x_i, y_i) \in \X_{\mathrm{cal}}} \log \SM^{y_i}\big(g(x_i) / \hat{T}\big) \numberthis \label{eqn:tempscale}
\end{align*}

For our results, we will in fact consider an even more powerful (and impractical) form of temperature scaling in which we allow access to the ground-truth distribution $\pi$:
\begin{align*}
    T = \argmin_{\hat{T} \in (0, \infty)} \E_{X \sim \pi_X} \left[d_{\KL}(\pi(Y \mid X), \SM^Y(g(X) / \hat{T}))\right]\numberthis \label{eqn:tempscale2}
\end{align*}
We will henceforth refer to an optimally temperature-scaled model with respect to Equation \eqref{eqn:tempscale2} as $g_T$. We will show in Section \ref{maintheory} that even when we allow this ``oracle'' temperature scaling, we cannot hope to calibrate models $g$ that satisfy some empirically-observed regularity properties.

\section{Training Approaches}
\subsection{Empirical Risk Minimization}\label{erm} 
In practice, models are often trained via empirical risk minimization (ERM). Due to the large number of parameters in modern models, training often leads to an {\em interpolator} that is very confident on every training data point $(x_i, y_i) \in \X$, as we formalize below: 

\begin{restatable}{definition}{interpolator}[ERM Interpolator]\label{erminterpolator}
For a dataset $\X$, we say that a model $g$ is an ERM interpolator if for every $(x_i, y_i) \in \X$ there exists a universal constant $C_i$ such that:
\begin{align*}
    \min_{s \neq y_i} g^{y_i}(x_i) - g^{s}(x_i) > \log k \quad \text{and} \quad \max_{r, s \neq y_i} g^{s}(x_i) - g^r(x_i) < C_i \numberthis \label{eqn:erminterpol}
\end{align*}
\end{restatable}

Equation \eqref{eqn:erminterpol} is slightly stronger than directly assuming $\SM^{y_i}(g(x_i)) \approx 1$; however, this type of significant logit separation is commonly observed in practice. Indeed, to further justify Equation \eqref{eqn:erminterpol} we train ResNeXt-50 \citep{resnext} models on CIFAR-10, CIFAR-100, and SVHN and examine the means of the max and second max logit over the training data; results are shown in Table \ref{tab:logits}.

\begin{table}[h]
    \centering
    \begin{tabular}{|c|c|c|c|}
        \hline
        Logit & CIFAR-10 & CIFAR-100 & SVHN \\
        \hline
        Max & 14.5755 & 18.6198 & 11.5285 \\
        \hline
        Second Max & 0.3650 & 5.4575 & -4.1904 \\
        \hline
    \end{tabular}
    \caption{Comparison of means of max and second max logit over different datasets, with ResNeXt-50 models trained as described in Section \ref{experiments}.}
    \label{tab:logits}
\end{table}

In addition to the interpolation condition of Definition \ref{erminterpolator}, we also constrain our attention to ERM models $g$ that satisfy a mild local-Lipschitz-like condition.
\begin{restatable}{definition}{regularity}[$\gamma$-Regular]\label{regularity}
For a point $(x_i, y_i) \in \X$, letting $L$ be a universal constant, we define:
\begin{align*}
    \B_{\gamma}(x_i) &= \{x \in \R^d:\ \norm{x_i - x} \leq \gamma\} \numberthis \label{eqn:bgamma} \\
    \G_{\gamma}(x_i) &= \{x \in \R^d:\ \abs{g^{y_i}(x_i) - g^{y_i}(x)} \leq L \gamma\} \numberthis \label{eqn:ggamma}
\end{align*}
We say that a model $g$ is $\gamma$-regular over a set $U$ if there exists a class $y \in [k]$ and $\Theta(\pi_X(U) N)$ points $(x_i, y) \in \X$ with $x_i \in U$ such that $\pi_{y}(X \in \G_{\gamma}(x_i) \mid X \in \B_{\gamma}(x_i)) \geq 1 - O(1/k)$. 
\end{restatable}
Basically, Definition \ref{regularity} codifies the idea that the model logit $g^{y_i}$ does not change much in a small enough neighborhood of each $x_i$ over a given set $U$, with high probability. Experiments providing empirical justification for this assumption are provided in Appendix \ref{addlog}. We will show in Theorem \ref{generalerm} that satisfying Definitions \ref{erminterpolator} and \ref{regularity} is sufficient for poor calibration for a wide class of data distributions, even when using temperature scaling with access to the ground truth distribution oracle.

\subsection{Mixup}\label{mixup}
In contrast, we can show that if we consider models minimizing a Mixup-like training objective instead of the usual negative log-likelihood in Equation \eqref{eqn:tempscale}, we can prevent major calibration issues.

Let $\D_{\lambda}$ denote a continuous distribution supported on $[0, 1]$ and let $z_{i, j}(\lambda) = \lambda x_i + (1 - \lambda) x_j$ (using $z_{i, j}$ when $\lambda$ is clear from context) where $(x_i, y_i), \ (x_j, y_j) \in \X$. Then we may define the empirical Mixup cross-entropy $J_{\mix}(g, \X, \D_{\lambda})$ as:
\begin{align*}
    J_{\mix}(g, \X, \D_{\lambda}) &= -\frac{1}{N^2} \sum_{i \in [N]} \sum_{j \in [N]} \E_{\lambda \sim \D_{\lambda}} \left[\lambda \log \SM^{y_i}(g(z_{i, j})) + (1 - \lambda) \log \SM^{y_j}(g(z_{i, j}))\right] \numberthis \label{eqn:mixce}
\end{align*}
Essentially, minimizing Equation \eqref{eqn:mixce} forces a model to linearly interpolate between its predictions $\SM^{y_i}(g(x_i))$ and $\SM^{y_j}(g(x_j))$ over the line segment connecting the points $x_i$ and $x_j$. This already provides some intuition for why Mixup-optimal models can be better calibrated: their predictions can change quickly as one moves away from the training data, avoiding issues that stem from $\gamma$-regularity of the logits combined with interpolation.

However, the line segment constraints of $J_{mix}(g, \X, \D_{\lambda})$ will not be enough to make this intuition rigorous when the data is in $\R^d$ with $d > 1$, since in this case line segments are measure zero sets with respect to $\mu_d$. We will thus augment Mixup to work with convex combinations of $d + 1$ points as opposed to two, and refer to this new objective as \textbf{$d$-Mixup}.

In generalizing from Mixup to $d$-Mixup, it is helpful from a theoretical standpoint to constrain the set of allowed mixings $\M_d(\X) \subset [N]^{d + 1}$. We will consider only mixing points at most some constant distance away from one another, and we will also preclude mixing points that are too highly correlated.\footnote{This means we do not mix points with themselves in $d$-Mixup; however, when $\pi_X$ has a density, this makes little difference since we can mix in a neighborhood of any point.} The precise definition of $\M_d(\X)$ can be found in Definition \ref{mixset} of Appendix \ref{mixproofs}; we omit it here due to its technical nature.

Now let $\D_{\lambda, d}$ denote a continuous distribution supported on the $d$-dimensional probability simplex $\Delta^d \subset \R^{d + 1}$. Defining $z_{\sigma}(\lambda) = \sum_{j \in [d + 1]} \lambda_j x_{\sigma_j}$ for $\lambda \in \supp(\D_{\lambda, d})$ and $\sigma \in \M_d(\X)$, we can define the empirical $d$-Mixup cross-entropy $J_{\mix, d}(g, \X, \D_{\lambda, d})$:
\begin{align*}
    J_{\mix, d}(g, \X, \D_{\lambda, d}) &= -\frac{1}{\abs{\M_d(\X)}} \sum_{\sigma \in \M_d(\X)} \E_{\lambda \sim \D_{\lambda, d}}\left[\sum_{j \in [d + 1]} \lambda_j \log \SM^{y_{\sigma_j}}(g(z_{\sigma}(\lambda)))\right] \numberthis \label{eqn:dmixce}
\end{align*} 

We will henceforth use $\X_{\mix, d}$ to denote the set of all $z_{\sigma}$. The main benefit of introducing the set $\M_d(\X)$ instead of just generalizing Equation \eqref{eqn:mixce} to mixing over $[N]^{d + 1}$ is that it allows us to use a reparameterization trick with which we can characterize the $d$-Mixup optimal prediction at every mixed point $z_{\sigma}$. We state only an informal version of this result below and defer a formal statement and proof to Appendix \ref{mixproofs}.

\begin{restatable}{lemma}{infdmixopt}[Informal Optimality Lemma]\label{infdmixopt}
    Every $g^* \in \arginf_g J_{\mix, d}(g, \X, \D_{\lambda, d})$ (where the $\arginf$ is over all extended $\R^d$-valued functions) satisfies $\SM^y(g^*(z)) = \xi_y(z)/\sum_{s \in [k]} \xi_s(z)$ for almost every $z_{\sigma} \in \X_{\mix, d}$, where $\xi_y(z)$ corresponds to the expected weight of class $y$ points over all mixing sets $\sigma \in \M_d(\X)$ from which we can obtain $z$.
\end{restatable}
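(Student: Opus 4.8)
The plan is to rewrite $J_{\mix,d}$ as an integral over $\R^d$ of a pointwise cross-entropy term depending on $g$ only through the value $\SM(g(z))$, and then to observe that any minimizer of such a functional must minimize each pointwise term for $\mu_d$-almost every $z$.

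First I would set up the reparameterization. For each fixed $\sigma \in \M_d(\X)$ the map $\lambda \mapsto z_\sigma(\lambda) = \sum_{j \in [d+1]} \lambda_j x_{\sigma_j}$ is affine from the $d$-dimensional simplex $\Delta^d$ into $\R^d$. The non-degeneracy conditions built into $\M_d(\X)$ (precluding mixings of points that are too highly correlated) should guarantee that $x_{\sigma_1},\ldots,x_{\sigma_{d+1}}$ are affinely independent, so this map is a bijection from $\Delta^d$ onto the solid simplex $\conv(x_{\sigma_1},\ldots,x_{\sigma_{d+1}}) \subset \R^d$ with constant nonzero Jacobian. Pushing $\D_{\lambda,d}$ forward through it yields a density $q_\sigma$ on $\R^d$ supported on that solid simplex, and makes the barycentric coordinate vector a deterministic measurable function $\lambda_\sigma(z)$ of $z$ there. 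Changing variables in \eqref{eqn:dmixce} turns the $\E_\lambda$ term for each $\sigma$ into $\int_{\R^d} q_\sigma(z) \sum_j [\lambda_\sigma(z)]_j \log \SM^{y_{\sigma_j}}(g(z))\, dz$; summing over $\sigma$ and regrouping by class label, I would define
\begin{align*}
\xi_y(z) = \sum_{\sigma \in \M_d(\X)} q_\sigma(z) \sum_{j\,:\, y_{\sigma_j} = y} [\lambda_\sigma(z)]_j, \qquad Q(z) = \sum_{y \in [k]} \xi_y(z) = \sum_{\sigma \in \M_d(\X)} q_\sigma(z),
\end{align*}
the last equality using $\sum_j \lambda_j = 1$. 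This $\xi_y(z)$ is exactly the informal ``expected weight of class $y$ points over all mixing sets $\sigma$ from which $z$ can be obtained,'' and $J_{\mix,d}(g,\X,\D_{\lambda,d}) = -\frac{1}{|\M_d(\X)|}\int_{\R^d}\sum_{y\in[k]}\xi_y(z)\log\SM^y(g(z))\,dz$.

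Then I would do the pointwise minimization. Fix $z$ with $Q(z) > 0$. Over all probability vectors $p$ attainable as $\SM(g(z))$ for extended-real-valued $g$ (i.e., all of the closed simplex $\Delta^{k-1}$), Gibbs' inequality gives $-\sum_y \xi_y(z)\log p_y \geq -\sum_y \xi_y(z)\log(\xi_y(z)/Q(z))$ with equality iff $p_y = \xi_y(z)/Q(z)$ for every $y$; hence the unique minimizing value of $\SM(g(z))$ is $\xi_y(z)/\sum_s \xi_s(z)$. (When some $\xi_y(z) = 0$ this forces $\SM^y(g(z)) = 0$, i.e.\ $g^y(z) = -\infty$, which is why the statement is phrased for extended-$\R^d$-valued functions.) Since the pointwise loss is bounded below by an integrable function of $z$ (the terms $-t\log t$ and $t\log Q$ are bounded on the compact support), $J_{\mix,d}$ admits a finite lower bound equal to the integral of this pointwise minimum; if some $g^* \in \arginf_g J_{\mix,d}$ had $\SM(g^*(z)) \neq \xi(z)/Q(z)$ on a positive-$\mu_d$-measure subset of $\{Q>0\}$, overwriting $g^*$ there with the pointwise minimizer would strictly decrease $J_{\mix,d}$, a contradiction. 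Hence $\SM^y(g^*(z)) = \xi_y(z)/\sum_s\xi_s(z)$ for $\mu_d$-a.e.\ $z$ with $Q(z)>0$; as $\{Q=0\}\cap\X_{\mix,d}$ is contained in the finitely many simplex boundaries together with the $\mu_d$-null set carrying no $\D_{\lambda,d}$ mass, this holds for almost every $z_\sigma \in \X_{\mix,d}$.

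I expect the main obstacle to be the measure-theoretic bookkeeping of the reparameterization rather than the optimization: extracting from the precise definition of $\M_d(\X)$ in Appendix \ref{mixproofs} that every admissible $\sigma$ yields affinely independent points (so the barycentric change of variables is a genuine bijection with nonvanishing Jacobian), controlling the $\mu_d$-null set of ``bad'' $z$ where distinct solid simplices overlap only on lower-dimensional faces or where $Q$ vanishes, and confirming integrability of the Gibbs lower bound so the ``minimize pointwise a.e.'' step is legitimate. A secondary subtlety, worth stating explicitly, is that the infimum is genuinely not attained among finite-valued functions whenever some $\xi_y(z)$ vanishes, so both the formal statement and its proof must be phrased in terms of extended-real-valued $g$ and the value $\SM(g^*(z))$ rather than $g^*(z)$ itself.
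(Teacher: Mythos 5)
Your proposal is correct and follows essentially the same route as the paper's proof of the formal version (Lemma \ref{dmixopt}): the barycentric change of variables with Jacobian $\abs{\det(L_{\sigma}^{-1})}$, Fubini to localize the objective at each $z$, and pointwise minimization of the resulting weighted cross-entropy, with your $q_\sigma(z)\sum_{j:y_{\sigma_j}=y}[\lambda_\sigma(z)]_j$ matching the paper's $\xi_y(z)$ exactly. The only cosmetic difference is that you invoke Gibbs' inequality where the paper uses strict convexity plus Lagrange multipliers, and you are somewhat more explicit about the overwriting argument and the extended-real-valued boundary cases.
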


We note that this lemma is analogous to Lemma 2.3 in the work of \citet{muthu1}, but avoids restrictions on the function class being considered and is non-asymptotic. Since we can characterize optimal predictions over $\X_{\mix, d}$, we can define $d$-Mixup interpolators as follows.

\begin{restatable}{definition}{mixinterpol}[$d$-Mixup Interpolator]\label{mixinterpol}
    For a dataset $\X$, we say that $g$ is a $d$-Mixup interpolator if $\SM^y(g(z)) = \SM^y(g^*(z)) \pm O(1/k)$ for almost every $z \in \X_{\mix, d}$ and $y \in [k]$, with $g^* \in \arginf_g J_{\mix, d}(g, \X, \D_{\lambda, d})$.
\end{restatable}

In Theorem \ref{generalmix}, we will show that $d$-Mixup interpolators can achieve good calibration on a subclass of distributions for which ERM interpolators perform poorly.

\begin{remark}
In practice it is unreasonable to mix $d + 1$ points when $d$ is large. However, we conjecture that due to the structure of practical models (i.e. neural networks), even mixing two points as in traditional Mixup is sufficient for achieving neighborhood constraints like those induced by $d$-Mixup. We introduce $d$-Mixup because we make no such structural assumptions in our theory.
\end{remark}

\section{Main Theoretical Results}\label{maintheory}
In this section, we show that even for simple data distributions, ERM interpolators can prevent temperature scaling from producing well-calibrated models, while modifications in the training process (Mixup) can potentially address this issue. Prior to proving our main results, we begin first with a 1-dimensional example that contains the key ideas of our analysis. The full proofs of all results in this section can be found in Appendix \ref{fullproofs}.

\subsection{Warm-Up: A Simple 1-D Example}\label{warmup}
\begin{restatable}{definition}{simpledist}[$\alpha$-Overlapping Intervals]\label{simpledist}
    Let $\tau(y)$ denote the parity of a nonnegative integer $y$ and let $\beta_y = \lfloor (y - 1)/2 \rfloor k + \alpha \tau(y - 1)$ for $y \in [k]$, where $\alpha \in [0, 1]$ is a parameter of the distribution. Then we define $\pi(X, Y)$ to be the distribution on $\R \times [k]$ such that $\pi_Y$ is uniform over $[k]$ and $\pi(X \mid Y = y)$ is uniform over $[\beta_y, \beta_y + 1]$.
\end{restatable}
Definition \ref{simpledist} corresponds to a distribution in which consecutive class-conditional densities are supported on overlapping intervals (whose overlap is determined by the parameter $\alpha$) with a spacing of $k$ between each pair of classes (see Figure \ref{fig:intervals}). The spacing of $k$ between pairs of classes is introduced only to simplify the $d$-Mixup analysis; it is not necessary for proving the negative results regarding ERM interpolators, and will not feature when we generalize to Definition \ref{generaldist}.

\begin{figure}[h]
    \centering
    \includegraphics[scale=0.35]{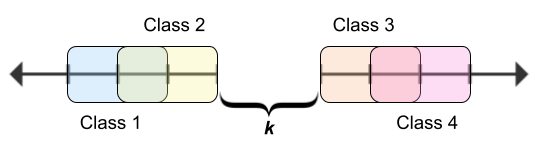}
    \caption{Visualization of Definition \ref{simpledist} for the case $k = 4$.}
    \label{fig:intervals}
\end{figure}

Our first result shows that when considering ERM interpolators for distributions of the type in Definition \ref{simpledist}, so long as the interpolators satisfy $\gamma$-regularity for sufficiently large $\gamma$, they will be poorly calibrated in the overlapping regions of the data distribution.

\begin{restatable}{proposition}{warmuperm}\label{warmuperm}
    Let $\X$ consist of $N$ i.i.d. draws from the distribution $\pi$ specified in Definition \ref{simpledist}, with a parameter $\alpha$. Then with probability at least $1 - k\exp(-\Omega(N/k))$ over the randomness of $\X$, the set $\Sp$ of all models $g$ that are ERM interpolators for $\X$ and $k/(4N)$-regular over each overlapping region in $\supp(\pi_X)$ is non-empty (in fact, uncountable). Furthermore, the predictive distribution $\hat{\pi}_T(Y \mid X) = \SM^Y(g_T(X))$ of the optimally temperature-scaled model $g_T$ for any $g \in \Sp$ satisfies:
    \begin{align*}
        \E_{X \sim \pi_X} \left[d_{\KL}(\pi(Y \mid X), \hat{\pi}_T(Y \mid X))\right] \geq \Theta((1 - \alpha - 1/k) \log k) \numberthis \label{eqn:warmup}
    \end{align*}
    Thus, for $\alpha = O(1)$, even with oracle temperature scaling every $g \in \Sp$ is asymptotically no better than random. In contrast, as the separation $\alpha \to 1$, the bound in Equation \eqref{eqn:warmup} becomes vacuous.
\end{restatable}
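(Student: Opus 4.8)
The plan is to establish the two assertions separately: first that the feasible set $\Sp$ is nonempty (in fact uncountable) on an event of probability $1-k\exp(-\Omega(N/k))$, and then that every $g\in\Sp$ has oracle-temperature-scaled KL error $\Theta((1-\alpha-1/k)\log k)$ no matter which $T$ is chosen.

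\emph{Existence of $\Sp$.} Let $\mathcal{E}$ be the event that every class $y\in[k]$ receives $\Theta(N/k)$ samples and that, inside the overlap region $O_j$ of its pair, a constant fraction of those class-$y$ samples are pairwise farther apart than $k/(2N)=2\gamma$. A Chernoff bound on the $\mathrm{Binomial}(N,1/k)$ class counts, a union bound over the $k$ classes, and the observation that the typical spacing of the $\Theta((1-\alpha)N/k)$ class-$y$ samples in the length-$(1-\alpha)$ interval $O_j$ is $\Theta(k/N)=\Theta(4\gamma)>2\gamma$ (so a greedy selection keeps a constant fraction of them $2\gamma$-separated) give $\Prob(\mathcal{E})\ge 1-k\exp(-\Omega(N/k))$. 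On $\mathcal{E}$ I build $g$ coordinatewise: take each $g^y$ continuous and piecewise linear, equal to a constant $M>\log k$ on the disjoint $\gamma$-balls around the selected $2\gamma$-separated class-$y$ samples in $O_j$ (hence $g^y$ is literally flat there, so $\gamma$-regularity over every $O_j$ holds for any $L$), equal to $M$ on small balls around all remaining class-$y$ samples, and equal to a fixed low value $m$ off $\supp(\pi_y)$, interpolating linearly in between. Every training point then satisfies $\min_{s\neq y_i}g^{y_i}(x_i)-g^{s}(x_i)=M-m>\log k$ with $C_i=0$, so $g$ is an ERM interpolator; and the transition profiles between consecutive plateaus can be varied over an uncountable family without disturbing any plateau value, giving uncountably many $g\in\Sp$.

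\emph{The lower bound.} Write $q_T=\SM(g/T)$ and use $\E_{X}[d_{\KL}(\pi(Y\mid X),q_T(X))]=\E_{(X,Y)\sim\pi}[-\log q_T^Y(X)]-\E_X[H(\pi(\cdot\mid X))]$; since $\pi(\cdot\mid x)$ is a point mass off $O=\bigcup_j O_j$ and $(\tfrac12,\tfrac12)$ on the two classes of $O_j$, the entropy term is $(1-\alpha)\log 2$, so it suffices to lower bound the cross-entropy, and it suffices to do so on $O$, where it equals $-\tfrac12\sum_j\int_{O_j}\log\!\big(q_T^{2j-1}(x)q_T^{2j}(x)\big)\,d\pi_X(x)$. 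Fix $g\in\Sp$, $T>0$, and a region $O_j$ with witnessing regularity class $y_j$ (partner class $y_j'$). At a class-$y_j$ training point $x_i\in O_j$, the two interpolator inequalities ($g^{y_j}(x_i)$ is the largest logit and exceeds the others by more than $\log k$, and the non-top logits lie within the universal constant $C_i$ of each other) give, for \emph{every} $T$, the elementary bound $-\log\!\big(q_T^{y_j}(x_i)q_T^{y_j'}(x_i)\big)\ge\max\!\big(\tfrac{\log k}{T},\ \log(k-1)-\tfrac{C_i}{T}\big)\ge\tfrac12\log k$ — the first term because $q_T^{y_j'}(x_i)\le e^{-(g^{y_j}(x_i)-g^{y_j'}(x_i))/T}<k^{-1/T}$, the second (using $C_i=O(1)$) by lower bounding $\mathrm{LSE}(g(x_i)/T)$ through the $C_i$-clustered non-top logits, and the two cover small and large $T$ respectively.

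\emph{Propagation and conclusion.} To turn the pointwise estimate into one on a positive-$\pi_X$-measure set, use $\gamma$-regularity: since $L\gamma=Lk/(4N)=o(1)$ (as $N=\Omega(\poly(k))$), on a $(1-O(1/k))$-fraction of $\B_\gamma(x_i)$ we have $g^{y_j}(x)\ge g^{y_j}(x_i)-o(1)$; carrying the argument of the previous paragraph to such $x$ — using that the regularity/local structure of an ERM interpolator keeps $g^{y_j'}$ from rising to the level of $g^{y_j}$ on this ball — preserves $-\log(q_T^{y_j}(x)q_T^{y_j'}(x))\ge(\tfrac12-o(1))\log k$ there. By the packing from the construction these balls are disjoint, their union has $\pi_X$-measure $\Theta(1-\alpha)/k$ inside $O_j$, hence (summing over the $k/2$ pairs) there is a set $G\subseteq O$ with $\pi_X(G)=\Theta(1-\alpha)$ on which $-\log(q_T^{2j-1}q_T^{2j})\ge(\tfrac12-o(1))\log k$. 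Substituting into the reduction and subtracting the $(1-\alpha)\log 2$ entropy term and an $O((1-\alpha)/k)\cdot\log k$ loss from the $O(1/k)$-bad parts of the balls and the $O(1/k)$ slack in Definition~\ref{regularity}, the expected KL is at least $\Theta((1-\alpha-1/k)\log k)$ uniformly in $T$; since $\alpha=O(1)$ makes this $\Theta(\log k)$ (the level achieved by guessing uniformly over $[k]$) while $\alpha\to 1$ makes it vanish, the two closing statements follow.

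\emph{Main obstacle.} The delicate step is the propagation: the pointwise bound at each training point is immediate for all $T$, but extending it to a neighborhood of positive $\pi_X$-measure requires excluding a ``self-calibrating'' interpolator that keeps $g^{y_j}$ and $g^{y_j'}$ close and both large throughout $O_j$ except at the measure-zero training points. Ruling this out is exactly where one must exploit the local regularity assumed of the model (and the specific scale $\gamma=k/(4N)$, chosen so that $L\gamma=o(1)$ while the regular balls still tile a constant fraction of each $O_j$) to prevent $g^{y_j'}$ from spiking up to the level of $g^{y_j}$ inside $\B_\gamma(x_i)$; doing this with the minimal regularity available, and making the $T$-dependence uniform across the small, moderate, and large regimes, is the technical heart of the argument.
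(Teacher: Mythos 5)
Your proposal follows essentially the same route as the paper's proof: Chernoff bounds plus an explicit flat-plateau construction to show $\Sp$ is uncountable with the stated probability, the same pointwise softmax lower bound at training points driven by the two ERM-interpolator inequalities with a small-$T$/large-$T$ case split (your $\max\bigl(\log k/T,\ \log(k-1)-C_i/T\bigr)$ is exactly the paper's two cases), propagation to the $\gamma$-balls via regularity, and summation of the $\Theta(1-\alpha)$-measure neighborhood unions over the $\Theta(k)$ overlap regions. The propagation step you flag as the main obstacle (regularity only constrains $g^{y_i}$, not the partner class's logit, on the ball) is handled no more rigorously in the paper itself, so your argument is correct at the same level of detail as the original.
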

\textbf{Proof Sketch.} We can show that $\Sp$ is non-trivial using Chernoff bound arguments, and then use $k/(4N)$-regularity to show that there is a significant fraction of $\supp (\pi_X)$ on which every $g \in \Sp$ predicts incorrect probabilities. The key idea is then that temperature scaling will only improve incorrect predictions for ERM interpolators to uniformly random (i.e. $1/k$), whereas the correct prediction in an overlapping region is $1/2$ for each of the overlapping classes.

On the other hand, for $d$-Mixup, each point in the overlapping regions can be obtained as a mixture of points from the overlapping classes, so we will have non-trivial probabilities for both classes.
\begin{restatable}{proposition}{warmupmix}\label{warmupmix}
    Let $\X$ be as in Proposition \ref{warmuperm} and $p(k)$ denote a polynomial in $k$ of degree at least one. Then taking $\D_{\lambda, 1}$ to be uniform, \textit{every} $1$-Mixup interpolator $g$ for $\X$ with the property that $\SM^y(g(x)) \leq 1 - \Omega(1/p(k))$ for every $x \in \supp(\pi_X) \setminus \X_{\mix, 1}$ and $y \in [k]$ satisfies with probability at least $1 - k^3 \exp(-\Omega(N/k^3))$:
    \begin{align*}
        \E_{X \sim \pi_X} \left[d_{\KL}(\pi(Y \mid X), \hat{\pi}(Y \mid X))\right] \leq \Theta(1) \numberthis \label{eqn:warmupmix}
    \end{align*}
    Note that this result is independent of the separation parameter $\alpha$.
\end{restatable}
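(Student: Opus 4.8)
The plan is to use the optimality characterization of Lemma~\ref{infdmixopt} to pin down $\hat\pi(Y\mid z)=\SM^Y(g(z))$ on the mixed set $\X_{\mix,1}$, establish an $\alpha$-independent lower bound on the predicted probability of each class actually present at $z$, and then integrate the resulting pointwise KL bound. First I would fix a high-probability event $\mathcal E$ on which: (i) every class $y\in[k]$ receives $\Theta(N/k)$ training points; (ii) for a $\Theta(1/k^2)$-net of subintervals of each class's unit support, the empirical count in each subinterval is within a constant factor of its expectation $\Theta(N/k^3)$; and (iii) the extreme training points of each class lie within $O((k/N)\log k)$ of the interval endpoints $\beta_y,\beta_y+1$. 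A Chernoff bound per event and a union bound over the $O(k^3)$ events of type (i)--(iii) give failure probability $k^3\exp(-\Omega(N/k^3))$, as claimed. Since consecutive pair-regions of Definition~\ref{simpledist} are separated by a gap of $k-O(1)$, the constant-distance admissibility constraint in $\M_1(\X)$ forces every admissible mixing pair to lie inside a single pair-region; hence on $\mathcal E$ we have $\X_{\mix,1}=\bigcup_m[\min_m,\max_m]$ with $\min_m,\max_m$ the extreme training points of region $m$, and by (iii) the leftover set $\supp(\pi_X)\setminus\X_{\mix,1}$ has Lebesgue measure $O(k^2\log k/N)=o(1)$ since $N=\Omega(\poly(k))$.

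Next, fix $z$ in the interior of some region $m$ and let $A=2m-1$, $B=2m$ be its two classes. By Lemma~\ref{infdmixopt}, Definition~\ref{mixinterpol}, and the within-region localization, for a.e.\ $z\in\X_{\mix,1}$ we have $\hat\pi(y\mid z)=\xi_y(z)/\sum_s\xi_s(z)\pm O(1/k)$ with $\xi_y(z)=0$ for $y\notin\{A,B\}$. The core step is to show, via the reparameterization of Appendix~\ref{mixproofs} and part (ii) of $\mathcal E$, that the admissible mixings through $z$ that place $\Theta(1)$ weight on class $A$ carry a constant fraction of the total mixing measure through $z$ whenever $z\in\supp(\pi_A)$, and symmetrically for $B$; equivalently, $\xi_A(z)/\xi_B(z)$ is bounded below by an absolute constant when $z\in\supp(\pi_A)$ and bounded above by an absolute constant when $z\notin\supp(\pi_A)$ (so $\hat\pi(B\mid z)\to 1$ there). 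The geometric fact enabling this is that each support $[\beta_A,\beta_A+1]$, $[\beta_B,\beta_B+1]$ covers all but an $O(1)$-length piece of the region $[\beta_A,\beta_B+1]$: for any such $z$ one can pair a class-$A$ point on one side of $z$ with a class-$B$ (or class-$A$) point on the other, and computing the mixing weight $\lambda=(x_j-z)/(x_j-x_i)$ shows that on a constant-fraction family of these pairs the weight on class $A$ is $\Theta(1)$ exactly when $z\in\supp(\pi_A)$. Together with the interpolator slack $O(1/k)$ this gives $\hat\pi(A\mid z)=\Omega(1)$ whenever $z\in\supp(\pi_A)$ and $\hat\pi(B\mid z)=\Omega(1)$ whenever $z\in\supp(\pi_B)$, with constants independent of $\alpha$.

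From here the pointwise KL is immediate. On the overlap $[\beta_B,\beta_A+1]$ of region $m$ the ground truth is $\pi(Y\mid z)=(\tfrac12,\tfrac12)$ on $\{A,B\}$, so $d_{\KL}(\pi(Y\mid z),\hat\pi(Y\mid z))=-\log 2-\tfrac12\log\!\big(\hat\pi(A\mid z)\hat\pi(B\mid z)\big)=O(1)$; on the non-overlapping parts the ground truth is a point mass on a single class, so the divergence equals $-\log\hat\pi(\cdot\mid z)=O(1)$. Hence $d_{\KL}(\pi(Y\mid X),\hat\pi(Y\mid X))=O(1)$ for a.e.\ $X\in\X_{\mix,1}$, and
\begin{equation*}
\E_{X\sim\pi_X}\big[d_{\KL}(\pi(Y\mid X),\hat\pi(Y\mid X))\big]\le O(1)\cdot\pi_X(\X_{\mix,1})+\Big(\sup_{x\in\supp(\pi_X)\setminus\X_{\mix,1}}d_{\KL}(\cdot)\Big)\cdot\pi_X\big(\supp(\pi_X)\setminus\X_{\mix,1}\big).
\end{equation*}
The non-overconfidence hypothesis $\SM^y(g(x))\le 1-\Omega(1/p(k))$ on $\supp(\pi_X)\setminus\X_{\mix,1}$ prevents $g$ from collapsing its predictions there and bounds the pointwise divergence on that set by $\poly(k)$, while by (iii) its measure is $O(k^2\log k/N)=o(1/\poly(k))$ since $N=\Omega(\poly(k))$ with large degree; so the second term is $o(1)$ (this is the only place the extra hypothesis on $g$ enters). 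The total is therefore $\Theta(1)$, and nothing in the argument depends on $\alpha$.

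The main obstacle is the core estimate of the second paragraph: controlling $\xi_A(z)/\xi_B(z)$ uniformly over \emph{all} $z$ in a region, including $z$ near the region boundary (where the total mixing measure through $z$ degenerates) and $z$ in a very thin overlap when $\alpha$ is close to $1$. Making this rigorous requires tracking the reparameterization Jacobian on $\M_1(\X)$ together with the empirical densities of both classes, and checking in each regime that the constant-weight family of admissible mixings through $z$ is a constant fraction of the total mixing measure through $z$. The favorable geometry of Definition~\ref{simpledist} --- both class supports in a region cover almost all of that region --- is precisely what forces the correct class(es) to retain $\Omega(1)$ predicted mass at every $z$, and is the reason the final bound is independent of the overlap parameter $\alpha$.
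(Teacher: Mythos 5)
Your proposal follows essentially the same route as the paper's proof: the same $\Theta(1/k^2)$-measure partition with Chernoff and union bounds giving the $1-k^3\exp(-\Omega(N/k^3))$ event, the same use of the optimality lemma to reduce everything to bounding $\xi_A(z)/\xi_B(z)$ on $\X_{\mix,1}$, and the same split of the expectation into the mixed set (pointwise $O(1)$ KL) and its small complement (controlled by the non-overconfidence hypothesis). The one step you flag as the main obstacle is exactly where the paper does its work, and it resolves it with an elementary counting argument rather than a Jacobian analysis: for $z\in[\beta_y,\beta_y+\alpha/2)$ the class-$y$ mixing weight is automatically at least $1/2$; for $z\in[\beta_y+\alpha/2,\beta_y+\alpha)$ there are $\Theta(\alpha N^2/k^2)$ admissible pairs placing constant weight on class $y$ out of $O(\alpha N^2/k^2)$ total; and for $z$ in the overlap both classes admit $\Theta((z-\beta_y)(\beta_y+1+\alpha-z)N^2/k^2)$ such pairs, giving $\xi_y(z)=\Omega(\xi_{y+1}(z))$ and vice versa (with the degenerate regimes $\alpha=O(1/k)$ or $1-\alpha=O(1/k)$ dismissed separately as contributing only $O(\log(k)/k)$ to the KL).
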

\textbf{Proof Sketch.} We can show with high probability that $\X_{\mix, 1}$ covers most of $\supp(\pi_X)$ uniformly, and then we can use Lemma \ref{infdmixopt} to precisely characterize the $1$-Mixup predictions over $\X_{\mix, 1}$.

The added stipulation that $\SM^y(g(x)) \leq 1 - \Omega(1/\poly(k))$ is necessary, since we cannot hope to prove an upper bound if $g$ is allowed to behave arbitrarily on $\supp(\pi_X) \setminus \X_{\mix, 1}$, and we also expect this in practice due to the regularity of models considered.
A key takeaway from Proposition \ref{warmupmix} is that the upper bound on the Mixup error we obtain is independent of the parameter $\alpha$ of our distribution; we will see that this is also the case in practice in Section \ref{experiments}.

\subsection{Generalizing to Higher Dimensions}\label{higherdim}
By extending the idea of overlapping regions in $\supp(\pi_X)$ from our 1-D example, we can generalize the failure of $\gamma$-regular ERM interpolators to higher-dimensional distributions.
\begin{restatable}{definition}{generaldist}[General Data Distribution]\label{generaldist}
    Given a parameter $\alpha \in [0, 1]$, we define $\pi$ to be any distribution whose support is contained in $\R^d \times [k]$ satisfying the following constraints:
    \begin{enumerate}
        \item (Classes are roughly balanced) $\pi_Y(Y = y) = \Theta(1/k)$.
        \item (Constant class overlaps) Letting $M$ denote a nonnegative integer constant, there exist $\Theta(k)$ classes $y$ for which there are classes $s_1(y), s_2(y), ..., s_m(y)$ for some $1 \leq m \leq M$ with $\pi_y(\supp(\pi_y) \cap \supp(\pi_{s_i(y)})) \geq 1 - \alpha$, and all other $s' \in [k]$ satisfy $\pi_X(\supp(\pi_y) \cap \supp(\pi_{s'})) = 0$.
        \item (Overlap density is proportional to measure) $\pi_y(X \in A) = \Theta(\mu_d(A))$ and $\pi_{s_i(y)}(X \in A) = \Theta(\mu_d(A))$ for every $A \subseteq \supp(\pi_y) \cap \supp(\pi_{s_i(y)})$.
    \end{enumerate}
\end{restatable}

Definition \ref{generaldist} is quite broad in that we make no assumptions on the behavior of the class-conditional densities outside of the overlapping regions. We now generalize Proposition \ref{warmuperm}.
\begin{restatable}{theorem}{generalerm}\label{generalerm}
    Let $\X$ consist of $N$ i.i.d. draws from any distribution $\pi$ satisfying Definition \ref{generaldist}, and let $r \in \R$ be such that the sphere with radius $r$ in $\R^d$ has volume $k/(2MN)$. Then the result of Proposition \ref{warmuperm} still holds for the set $\Sp_d$ of ERM interpolators for $\X$ which are $r$-regular over each overlapping region in $\supp(\pi_X)$.
\end{restatable}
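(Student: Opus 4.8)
The plan is to mirror the structure of the proof of Proposition \ref{warmuperm}, replacing the 1-D interval geometry with the $d$-dimensional overlap geometry guaranteed by Definition \ref{generaldist}. The argument splits into three pieces: (i) showing $\Sp_d$ is non-empty (indeed uncountable) with high probability; (ii) identifying a fixed fraction of $\supp(\pi_X)$ on which every $g \in \Sp_d$ predicts essentially the wrong conditional distribution; and (iii) showing that oracle temperature scaling cannot fix this, yielding the claimed lower bound on the expected KL divergence.

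First I would establish the high-probability event. Fix one of the $\Theta(k)$ overlapping classes $y$ and one of its overlap partners $s_i(y)$; let $A = \supp(\pi_y) \cap \supp(\pi_{s_i(y)})$. By constraint 3 of Definition \ref{generaldist}, the restrictions of $\pi_y$ and $\pi_{s_i(y)}$ to $A$ have densities comparable to Lebesgue measure, so a standard covering/Chernoff argument shows that with probability at least $1 - k\exp(-\Omega(N/k))$ (union bounding over the $\Theta(k)$ overlapping classes and their $\leq M$ partners), every ball of radius $r$ inside $A$ contains roughly $\pi_X(\text{ball})\cdot N = \Theta(k/(2MN)\cdot N) = \Theta(k/M) = \Theta(k)$ sample points of \emph{each} of the two classes $y$ and $s_i(y)$, and moreover these samples are spread through $A$ so that $\X$-points of class $y$ (resp. $s_i(y)$) hit a $\Theta(1)$-fraction of $A$ at scale $r$. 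The choice of $r$ — the radius for which the volume of the $r$-sphere is $k/(2MN)$ — is exactly what makes $r$-regular balls simultaneously (a) large enough to force logit near-constancy of $g^{y}$ on a cluster of genuinely class-$y$ points and genuinely class-$s_i(y)$ points, and (b) small enough that $\Theta(k)$ such clusters fit inside each overlap region. Non-emptiness of $\Sp_d$ then follows by an explicit construction: take any smooth $g$ that interpolates $\X$ in the sense of Definition \ref{erminterpolator} and is constant at scale $r$ around each training point in the overlap regions — there is an uncountable family of such functions — and verify the $r$-regularity predicate of Definition \ref{regularity} using the density comparability in constraint 3.

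Next, the core of the argument: on each overlap region $A$ for a pair $(y, s_i(y))$, consider a point $x_i \in \X$ with label $y$ lying in $A$ and satisfying the $r$-regularity neighborhood conclusion, i.e. $\pi_y(X \in \G_r(x_i) \mid X \in \B_r(x_i)) \geq 1 - O(1/k)$. Because $g$ interpolates, $g^{y}(x_i) - g^{s}(x_i) > \log k$ for all $s \neq y$; because $g$ is $r$-regular, $g^{y}$ varies by at most $Lr$ over $\G_r(x_i)$, which by the high-probability event contains $\Theta(k)$ training points labeled $s_i(y)$ — and \emph{those} points force $g^{s_i(y)}$ to be large there too (at least $\log k$ above the other logits at their own centers). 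Combining the two constraints, on a $\Theta(1)$-fraction of $A$ the logit $g^{y}$ is within an additive constant of $g^{s_i(y)}$ while both dominate the remaining logits, so $\SM^y(g(x)) = \Theta(1)$ and $\SM^{s_i(y)}(g(x)) = \Theta(1)$ with the other classes negligible — yet the true conditional is uniform over the $\{y, s_1(y), \dots, s_m(y)\}$-block wherever all those supports overlap (and $1/2$-$1/2$ in the simplest case, as in the 1-D warm-up). The mismatch is a $\Theta(1)$ KL gap per such point. Then the temperature-scaling step: a single temperature $\hat T$ applied globally can only move an interpolator's near-one-hot predictions on the non-overlap bulk or flatten them toward uniform $1/k$ in the overlap regions; in either case, on the $\Theta(1)$-fraction of $\supp(\pi_X)$ coming from the $\Theta(k)$ overlap regions, $\hat\pi_T(Y \mid X)$ is either still concentrated on one or two classes or is $\approx 1/k$ on all $k$, and against the true conditional (supported on an $O(1)$-size block with mass $\Theta(1)$ each) this contributes $d_{\KL} = \Theta(\log k)$ per point, for a total of $\Theta((1 - \alpha - 1/k)\log k)$ — the $(1-\alpha)$ factor entering through constraint 2's $\pi_y(A) \geq 1 - \alpha$, exactly as in Proposition \ref{warmuperm}. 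I would optimize the temperature argument by noting the temperature-scaled KL objective \eqref{eqn:tempscale2} is convex in $\log\hat T$ and splitting $\E_{X \sim \pi_X}$ into the overlap mass and the complement, arguing that whichever way $\hat T$ is chosen the overlap contribution stays $\Omega(\log k)$.

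The main obstacle I anticipate is the geometric bookkeeping in step (ii): in the 1-D case, ``overlapping region'' is a single interval and every interior point is trivially covered by a ball of $r$-regular neighbors of both classes, but in $\R^d$ the overlap region $A$ can be an arbitrary measurable set (Definition \ref{generaldist} only constrains its $\pi$-mass and density, not its shape), so I must be careful that a $\Theta(1)$-fraction of $A$ — not just its interior far from the boundary — is within distance $r$ of training points of \emph{both} classes whose $r$-regular neighborhoods overlap. This is handled by the choice of $r$ together with constraint 3 (density $\Theta(\mu_d)$ on all of $A$, including near $\partial A$), plus possibly discarding an $r$-neighborhood of $\partial A$ whose measure is $o(\mu_d(A))$ when $r \to 0$ as $k, N \to \infty$; the polynomial lower bound $N = \Omega(\poly(k))$ ensures $r$ is indeed small. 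A secondary technical point is correctly propagating the ``$g^{s_i(y)}$ is large on $\G_r(x_i)$'' claim: this needs the $s_i(y)$-labeled training points in the ball to \emph{themselves} be $r$-regular for their own logit, which requires applying Definition \ref{regularity} with the roles of $y$ and $s_i(y)$ swapped — legitimate since constraint 3 gives the density lower bound symmetrically for both classes on $A$.
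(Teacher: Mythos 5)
There is a genuine gap, and it comes from a miscount at the very start that inverts the mechanism of the whole proof. You compute the expected number of sample points of each class in an $r$-ball as $\pi_X(\text{ball})\cdot N = \Theta\bigl(k/(2MN)\bigr)\cdot N = \Theta(k)$, but this drops the class-prior factor: by constraint 1 of Definition \ref{generaldist}, a sample is of class $s_i(y)$ with probability $\Theta(1/k)$, and by constraint 3 it then lands in the ball with probability $\Theta(k/(2MN))$, so the expected number of overlapping-class points in an $r$-ball is $N\cdot m\cdot\Theta(1/k)\cdot\Theta(k/(2MN)) = \Theta(m/M) = O(1)$ --- in fact at most $1/2$ by the choice of $r$. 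The entire point of that choice of $r$ is that, by a union bound, a \emph{constant fraction} of the class-$y$ training points in the overlap region have \emph{no} points of $s_1(y),\dots,s_m(y)$ within distance $r$ (this is also what makes $\Sp_d$ non-empty: the model can be locally flat in $g^y$ there without colliding with interpolation constraints of the other class). Your picture --- every $r$-ball contains $\Theta(k)$ training points of \emph{both} classes --- is the opposite regime.

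This error propagates fatally into your step (ii). With both classes present in the ball you conclude that $\SM^y(g(x)) = \Theta(1)$ and $\SM^{s_i(y)}(g(x)) = \Theta(1)$ against a true conditional that is uniform on the same small block; that is a $\Theta(1)$ KL gap per point, i.e.\ an essentially \emph{calibrated} model, and it cannot yield the $\Theta((1-\alpha)\log k)$ lower bound --- indeed your own final sentence asserting $d_{\KL} = \Theta(\log k)$ per point contradicts your intermediate conclusion. The paper's argument runs the other way: on the union $U$ of $r$-neighborhoods of class-$y$ points with no overlapping-class neighbors, interpolation gives $g^y(x_i) - g^{s}(x_i) > \log k$ and $r$-regularity propagates this throughout $U$, so the model assigns probability $O(1/k)$ to class $s$ on a set where the true conditional puts $\Theta(1)$ mass on $s$; temperature scaling (whether $T<1$ or $T\ge 1$, using the bounded-gap constant $C_i$ from Definition \ref{erminterpolator}) can at best raise this to $1/k$, which still costs $-\tfrac{1}{2}\log(1/k) = \Theta(\log k)$ in cross-entropy on $U$, and $\mu_d(U) = \Omega(1-\alpha)$ summed over the $\Theta(k)$ overlap regions gives the claimed bound. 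Your three-part outline and the probability bookkeeping are the right shape, but the lower bound does not go through without the ``empty ball'' step.
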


To generalize Proposition  \ref{warmupmix}, however, we need further restrictions on $\pi$. Mainly, we need to have significant spacing between non-overlapping classes (as in Definition \ref{simpledist}), and we need to restrict the class-conditional densities such that mixings in $\X_{\mix, d}$ are not too skewed towards a small subset of classes. The precise formulation of this assumption can be found in Appendix \ref{mixproofs}.

\begin{restatable}{theorem}{generalmix}\label{generalmix}
    Let $\X$ consist of $N$ i.i.d. draws from any distribution $\pi$ satisfying Definition \ref{generaldist} and Assumption \ref{mixdist}, and let $p(k)$ be as in Proposition \ref{warmupmix}. Then the result of Proposition \ref{warmupmix} still holds when considering $d$-Mixup interpolators $g$ for $\X$ where the mixing distribution $\D_{\lambda, d}$ is uniform over the $d$-dimensional probability simplex.
\end{restatable}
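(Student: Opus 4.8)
The plan is to follow the two-step template behind Proposition~\ref{warmupmix}, upgrading each step to $\R^d$. First I would show that, on an event of probability at least $1 - k^3\exp(-\Omega(N/k^3))$ over the draw of $\X$, the mixed-point set $\X_{\mix, d}$ contains all of $\supp(\pi_X)$ except for a set of $\pi_X$-measure $o(1)$. Then, on that event, I would invoke the optimality characterization of Lemma~\ref{infdmixopt} together with the $d$-Mixup interpolator condition of Definition~\ref{mixinterpol} to pin down $\SM^y(g(z))$ for almost every covered $z$, bound $d_{\KL}(\pi(Y \mid z), \hat{\pi}(Y \mid z))$ pointwise, and integrate against $\pi_X$. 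Relative to the one-dimensional case, the two genuinely new ingredients are (a) the covering step must be carried out with $d$-simplices rather than intervals, and (b) controlling the expected class-weight vector $\xi(z)$ from Lemma~\ref{infdmixopt} on the overlap regions now really uses both the spacing and the non-skewing conditions bundled into Assumption~\ref{mixdist}.

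For the covering step, I would overlay $\R^d$ with a mesh whose cell radius $r'$ is small enough that the $r'$-boundary layer of $\supp(\pi_X)$ has $\pi_X$-measure $o(1)$, yet large enough that, by part~(3) of Definition~\ref{generaldist}, every cell meeting $\supp(\pi_X)$ receives enough sample points from each of the at most $M+1$ classes supported there for a Chernoff bound to go through; a union bound over the cells then yields the failure probability of Proposition~\ref{warmupmix}. This is exactly where the assumption $N = \Omega(\poly(k))$ (with a degree that may grow with $d$) and the $\poly(k)$-bounded volume of $\supp(\pi_X)$ --- guaranteed by the spacing in Assumption~\ref{mixdist} --- are used. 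On the resulting good event, any target point $z$ at distance more than $r'$ from $\partial \supp(\pi_X)$ has, within a constant radius, enough sample points from the class(es) supported at $z$ to select $d+1$ of them in sufficiently general position --- so that the non-degeneracy and anti-correlation conditions defining $\M_d(\X)$ (Definition~\ref{mixset}) hold --- whose convex hull contains $z$; hence $z \in \X_{\mix, d}$, and the uncovered remainder has $\pi_X$-measure $o(1)$.

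For the prediction step, Lemma~\ref{infdmixopt} gives $\SM^y(g^*(z)) = \xi_y(z)/\sum_{s \in [k]}\xi_s(z)$ for almost every $z \in \X_{\mix, d}$, and Definition~\ref{mixinterpol} transfers this to $g$ up to an additive $O(1/k)$. When $z$ lies in a region where only class $y$ has support, the spacing in Assumption~\ref{mixdist} forces every mixing set $\sigma \in \M_d(\X)$ producing $z$ to consist solely of class-$y$ points, so $\xi_y(z) = 1$, $\hat{\pi}(y \mid z) \geq 1 - O(1/k)$, and $d_{\KL}(\pi(Y \mid z), \hat{\pi}(Y \mid z)) = O(1/k)$. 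When $z$ lies in the overlap of $y$ with $s_1(y), \dots, s_m(y)$, part~(3) of Definition~\ref{generaldist} makes the local densities of these at most $M+1$ classes comparable, while the non-skewing half of Assumption~\ref{mixdist} ensures that averaging over $\M_d(\X)$ and over the uniform $\D_{\lambda, d}$ on $\Delta^d$ leaves $\xi_y(z), \xi_{s_i(y)}(z) = \Theta(1)$; the spacing simultaneously forces $\xi_s(z) = 0$ for every other $s$. Hence $\hat{\pi}(s \mid z) = \Theta(1)$ for each of the classes on which $\pi(Y \mid z)$ is itself supported, so $d_{\KL}(\pi(Y \mid z), \hat{\pi}(Y \mid z)) = O(1)$ there. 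Integrating, the $\Theta(k)$ overlapping classes each contribute $O(1) \cdot \Theta(1/k)$, the non-overlap regions contribute $O(1/k)$ in total, and the $o(1)$-measure uncovered set contributes $o(1)$ once the stipulation $\SM^y(g(x)) \leq 1 - \Omega(1/p(k))$ on $\supp(\pi_X) \setminus \X_{\mix, d}$ is invoked (without which no upper bound is possible); summing gives the bound $\Theta(1)$ of Equation~\eqref{eqn:warmupmix}.

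I expect the main obstacle to be the combination of the simplicial covering argument with the quantitative control of $\xi(z)$ on overlap regions. In one dimension, both ``$z$ is covered by mixings'' and ``what is the class weight at $z$'' reduce to counting sample points in short subintervals; in $\R^d$ one must instead argue that a dense random point cloud, restricted to the mixing sets allowed by $\M_d(\X)$, still produces simplices around almost every target point, and --- more delicately --- that integrating over the uniform distribution on $\Delta^d$ and over all admissible simplices does not wash out the $\Theta(1)$ lower bounds on the weights of the overlapping classes. This last point is precisely what Assumption~\ref{mixdist} is designed to guarantee, and verifying it is the technically demanding step; the remaining Chernoff/union-bound bookkeeping and the pointwise KL estimates are routine given Lemma~\ref{infdmixopt}.
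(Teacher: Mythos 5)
Your overall route is the same as the paper's: restrict attention to $\X_{\mix,d}$, use Lemma~\ref{infdmixopt} plus Definition~\ref{mixinterpol} to pin down $\SM^y(g(z))$ there via the class weights $\xi_y(z)$, split into overlap and non-overlap regions, and handle $\supp(\pi_X)\setminus\X_{\mix,d}$ with the $1-\Omega(1/p(k))$ stipulation. Two points diverge from the paper, one of which is a genuine error. First, the covering step you propose to prove with a mesh-and-Chernoff argument is not proved in the paper at all: it is taken as a hypothesis (part~2 of Assumption~\ref{mixdist} directly asserts $\pi_X(\supp(\pi_X)\setminus\X_{\mix,d}) = O(1/k)$ with probability $1 - k^3\exp(-\Omega(N/k^3))$). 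Your sketch of that derivation would need geometric regularity of $\supp(\pi_X)$ (boundary-layer measure, cells receiving points of each local class in general position) that Definition~\ref{generaldist} does not supply, which is presumably why the authors assume the conclusion rather than derive it; as written your covering argument is under-specified, but since the statement is assumed this does not affect the theorem.

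The genuine error is in your non-overlap case. You claim that for $z$ in the part of $\supp(\pi_y)$ not covered by any $\supp(\pi_{s_i(y)})$, the spacing condition forces every admissible $\sigma$ producing $z$ to consist solely of class-$y$ points, hence $\xi_y(z)=1$ and $d_{\KL}=O(1/k)$. The spacing in part~1 of Assumption~\ref{mixdist} separates only \emph{non-overlapping} classes; the overlapping neighbors $s_1(y),\dots,s_m(y)$ sit within constant distance of $\supp(\pi_y)$ (they share $\geq 1-\alpha$ of its mass), so their points are admissible mixing partners under Definition~\ref{mixset} and can produce $z$ even when $z\notin\supp(\pi_{s_i(y)})$. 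This is exactly the situation already visible in the 1-d warm-up (the sub-case $z\in[\beta_y+\alpha/2,\beta_y+\alpha)$ in the proof of Proposition~\ref{warmupmix}), and it is why the paper invokes the dichotomy in part~3 of Assumption~\ref{mixdist} to conclude only $\SM^y(g(z))\in[\Theta(1),1]$ in the non-overlap region, not $1-O(1/k)$. Your final $\Theta(1)$ bound survives because $-\log\Theta(1)=O(1)$ pointwise still integrates to $O(1)$, but the intermediate claim is false and skips the one place where the third part of Assumption~\ref{mixdist} is actually needed outside the overlap regions.
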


\section{Experiments}\label{experiments}
We now verify that the theoretical phenomenon of ERM calibration performance decreasing with distributional overlap also manifests in practice. For all of the experiments in this section, we train ResNeXt-50 models \citep{resnext} due to the continued use of the ResNet \citep{resnet} family of models as strong baselines \citep{wightman2021resnet}, and we report the mean performance over 5 random initializations with 1 standard deviation error bounds. For metrics, we report negative-log-likelihood (NLL), ECE with binning using 15 uniform bins (as was done by \citet{guocal2017}), and also top-class adaptive calibration error (ACE) \citep{nixon2020measuring} which is ECE but computed using equal-weight bins. We also use confidence histograms and reliability diagrams to visualize calibration. The former corresponds to histograms of the predicted probabilities for the positive class (or top-class probabilities in the multi-class setting), while the latter correspond to binning the top-class probability predictions and plotting the average accuracy in each bin. These visualizations are generated using the calibration library of \citet{Kueppers_2020_CVPR_Workshops}.

All models were trained for 200 epochs using Adam \citep{adam} with the standard hyperparameters of $\beta_1 = 0.9,\  \beta_2 = 0.999$, a learning rate of $0.001$, and a batch size of 500 on a single A5000 GPU using PyTorch \citep{pytorch}. We did not use any additional training heuristics such as Dropout \citep{dropout}. In preliminary experiments, we found that minor changes to these hyperparameters did not affect our experimental results so long as the training horizon was sufficiently long (so as to achieve interpolation of training data). For each training dataset considered in this section, we set aside 10\% of the data for calibration.

\subsection{Synthetic Data}\label{synthetic}
We first consider a synthetic data model which closely resembles Definition \ref{simpledist}: binary classification on overlapping Gaussian data. Namely, we consider class 1 points to be drawn from $\Gauss(0, \Id_{300})$ (i.e. 300-dimensional mean zero Gaussian), and class 2 to be drawn from $\Gauss(\mu, \Id_{300})$. The probability of overlap between the two classes is controlled by the choice of the mean $\mu$ of class 2.

We consider $\mu \in \{0.25 * \mathbf{1}, 0.05 * \mathbf{1}, 0.01 * \mathbf{1} \}$ (where $\mathbf{1}$ here denotes the all-ones vector in $\R^{300}$) to cover a range of overlaps (0.05 corresponds to roughly $1/\sqrt{d}$ here). We sample 4000 training data points and 1000 test data points according to the aforementioned distribution, with class 1 and class 2 being chosen uniformly. 

We compare the performance of ERM with temperature scaling (TS) to that of Mixup \textit{without} temperature scaling (which means the set aside calibration data is not used). Our implementation of temperature scaling follows that of \citet{guocal2017}. For Mixup, we consider the usual Mixup formulation with uniform mixing distribution as well as $d$-Mixup with $d = 2, 4$ (i.e. mixing 3 and 5 points respectively) with uniform mixing (although we do not enforce the technical conditions in Definition \ref{mixset} for $d$-Mixup). Results are reported in Table \ref{tab:synthetic}.

\begin{table}[h]
    \centering
    \begin{tabular}{|c|c|c|c|c|}
        \hline
        Training Method & Metric & $\mu = 0.25$ & $\mu = 0.05$ & $\mu = 0.01$ \\
        \hline
        \multirow{3}{5em}{ERM + TS} & NLL & 0.2593 $\pm 0.0341$ & 3.7331 $\pm 0.4986$ & 4.2977 $\pm 0.5641$ \\
        & ECE & \textbf{0.0407} $\pm 0.0026$ & 0.4047 $\pm 0.0173$ & 0.4686 $\pm 0.0059$ \\
        & ACE & 0.3114 $\pm 0.0520$ & 0.2755 $\pm 0.0225$ & 0.2972 $\pm 0.0167$ \\
        \hline
        \multirow{3}{5em}{Mixup} & NLL & \textbf{0.1912} $\pm 0.0111$ & 0.7487 $\pm 0.0356$ & 0.8247 $\pm 0.0400$ \\
        & ECE & 0.1161 $\pm 0.0096$ & 0.1481 $\pm 0.0227$ & 0.1945 $\pm 0.0204$ \\
        & ACE & \textbf{0.1629} $\pm 0.0192$ & 0.1456 $\pm 0.0218$ & 0.2174 $\pm 0.0416$ \\
        \hline
        \multirow{3}{5em}{$2$-Mixup} & NLL & 0.2205 $\pm 0.0338$ & 0.7197 $\pm 0.0150$ & 0.7996 $\pm 0.0331$ \\
        & ECE & 0.1490 $\pm 0.0226$ & 0.1169 $\pm 0.0095$ & 0.1842 $\pm 0.0292$ \\
        & ACE & 0.1918 $\pm 0.0282$ & \textbf{0.1192} $\pm 0.0175$ & \textbf{0.2085} $\pm 0.0324$ \\
        \hline
        \multirow{3}{5em}{$4$-Mixup} & NLL & 0.2729 $\pm 0.0299$ & \textbf{0.6784} $\pm 0.0079$ & \textbf{0.7444} $\pm 0.0254$ \\
        & ECE & 0.1979 $\pm 0.0199$ & \textbf{0.0549} $\pm 0.0164$ & \textbf{0.1018} $\pm 0.0305$ \\
        & ACE & 0.1804 $\pm 0.0103$ & 0.1374 $\pm 0.0370$ & 0.2279 $\pm 0.0408$ \\
        \hline
    \end{tabular}
    \caption{Mean NLL, ECE, and ACE on test data over 5 runs with 1 standard deviation error bounds on 2-class Gaussian data.}
    \label{tab:synthetic}
    \vspace{-4mm}
\end{table}

Our results are what we would expect from our theory \--- the NLL of the ERM model increases sharply as we increase the distributional overlap, while the Mixup models have comparatively small increases in NLL. Additionally, we find that the $d$-Mixup models perform better than the ERM and regular Mixup models as we increase the amount of overlap. Of note is the fact that this occurs even though this synthetic dataset consists of only 2 classes, suggesting that it is perhaps possible to improve our theoretical observations to a non-asymptotic setting with more precise analysis.

A reasonable concern with comparing model NLL is that it can correlate strongly with generalization performance. However, we can verify that in fact the underlying issue is one of calibration and not of generalization by comparing confidence histograms and reliability diagrams for the models being considered. Figure \ref{fig:gaussconf} compares the ERM + TS model to the $4$-Mixup model; we see that the average positive class confidence for the $4$-Mixup model concentrates around 0.5, while the ERM confidence is bimodal at 0 and 1 (i.e. extremely overconfident, even when making mistakes), and that the test accuracy of both models is roughly the same. Also, the observed underconfidence at lower probabilities for both models can be attributed to the lack of predictions at those probabilities. Additional plots for other models and levels of separation can be found in Appendix \ref{addsynth}.

\begin{figure}[ht]
    \centering
    \begin{subfigure}[b]{0.45\textwidth}
        \centering
        \includegraphics[width=\textwidth]{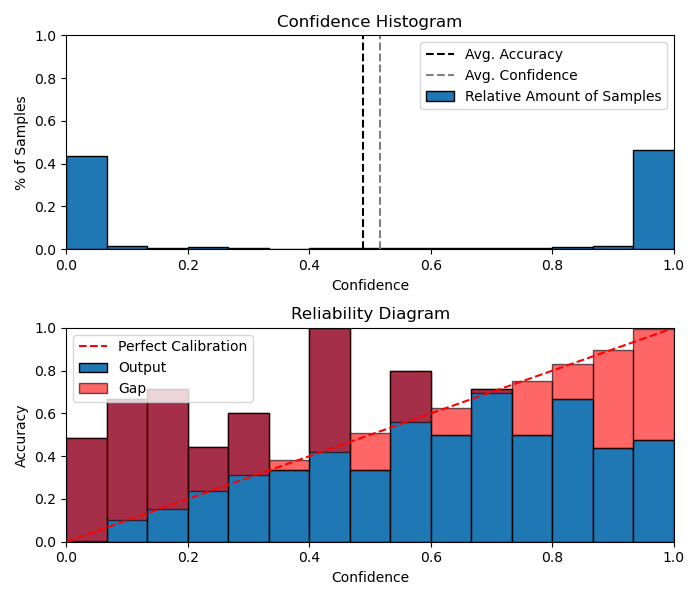}
        \caption{ERM + TS}
    \end{subfigure} \hfill
    \begin{subfigure}[b]{0.45\textwidth}
        \centering
        \includegraphics[width=\textwidth]{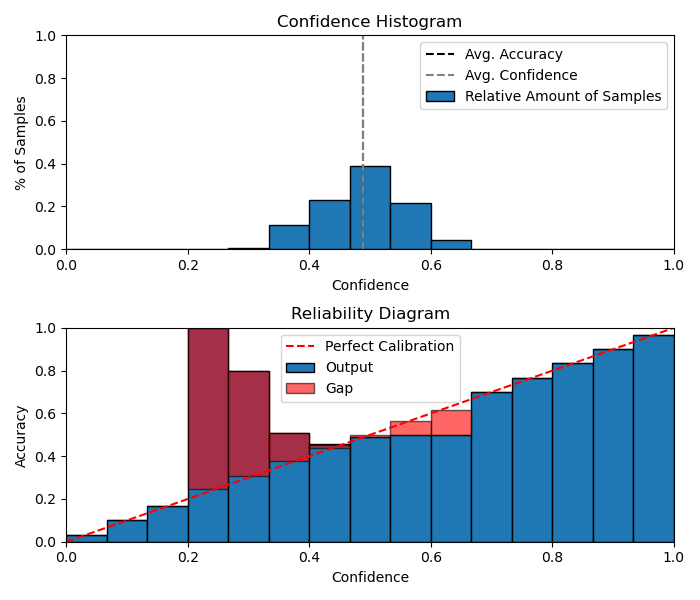}
        \caption{$4$-Mixup}
    \end{subfigure}
    \caption{Confidence histograms and reliability diagrams for ERM + TS and $4$-Mixup models on the test Gaussian data with $\mu = 0.01 * \mathbf{1}$, using 15 bins. Overall accuracy on the test data, as well as average confidence, are reported as dashed lines on the histograms.}
    \label{fig:gaussconf}
\end{figure}

We observe a similar phenomenon to that of the NLL results when comparing model ECE, although we note that the ACE performance remains roughly the same across different overlap levels (this is outside the scope of our theory, since this is a byproduct of the binning procedure used for ACE). The main takeaway from this set of experiments is that Mixup, and particularly the $d$-Mixup variants, have a \textit{significant regularization effect on model confidence}. This regularization effect is more pronounced as we increase $d$ (as we would expect), but comes at a trade-off of additional computational overhead (computing convex combinations of $d + 1$ points per batch). 

\subsection{Image Classification Benchmarks}\label{imageclass}
We can also verify that the phenomena observed in synthetic data translates to the more realistic benchmarks of CIFAR-10, CIFAR-100, and SVHN. To artificially introduce (more) overlaps in the datasets, we add label noise to the training data. In order to follow the style of Definition \ref{generaldist}, our label noise procedure consists of randomly pairing up each class in the data with another distinct class, and then flipping the label of the original class points to be the label of the paired up class according to the desired level of label noise. 

Results for ERM + TS and Mixup in terms of test NLL are shown in Table \ref{tab:imageclass}. Results in terms of ECE and ACE can be found in Appendix \ref{addimg}; they follow similar trends to NLL. 
We additionally trained $d$-Mixup models on the same data, but we found that $d > 2$ led to underconfidence on these datasets, so we report just the results for Mixup. We also analyzed the relative improvement of TS over baseline ERM, as well as Mixup + TS; these results can be found in Appendix \ref{tempscalingimpact}.

\begin{table}[h]
    \centering
    \begin{tabular}{|c|c|c|c|}
         \hline
         Dataset & Label Noise & ERM + TS (NLL) & Mixup (NLL)\\
         \hline
         \multirow{3}{5em}{CIFAR-10} & 0\% & 2.0109 $\pm 0.1750$ & \textbf{0.8190} $\pm 0.0349$ \\
         & 25\% & 3.2091 $\pm 0.2042$ & \textbf{1.3102} $\pm 0.0311$ \\
         & 50\% & 5.9533 $\pm 0.7209$ & \textbf{1.7729} $\pm 0.0773$ \\
         \hline
         \multirow{3}{5em}{CIFAR-100} & 0\% & 5.4969 $\pm 0.8198$ & \textbf{2.4608} $\pm 0.0482$ \\
         & 25\% & 6.3767 $\pm 0.4089$ & \textbf{2.9753} $\pm 0.1004$ \\
         & 50\% & 8.0224 $\pm 0.6025$ & \textbf{3.5443} $\pm 0.0367$ \\
         \hline
         \multirow{3}{5em}{SVHN} & 0\% & 0.6919 $\pm 0.1178$ & \textbf{0.3441} $\pm 0.0348$ \\
         & 25\% & 1.6797 $\pm 0.2442$ & \textbf{0.8546} $\pm 0.0358$ \\
         & 50\% & 4.1807 $\pm 0.3642$ & \textbf{1.5030} $\pm 0.0367$ \\
         \hline
    \end{tabular}
    \caption{Mean NLL over 5 runs with 1 standard deviation error bounds on image classification benchmark test data with varying levels of label noise.}
    \label{tab:imageclass}
\end{table}

We see in Table \ref{tab:imageclass} the same behavior we observed in Section \ref{synthetic}; the NLL for the ERM + TS models jumps quickly with increasing label noise while the Mixup NLL increases comparatively slowly. While in the cases of CIFAR-10 and CIFAR-100 some of this gap can be associated to improved test error of the Mixup models, we find for SVHN that ERM actually outperforms Mixup in terms of test error and still has massive gaps in terms of NLL. Furthermore, we show in Appendix \ref{addimg} that Mixup also dominates in terms of ECE and ACE across all datasets and virtually every label noise level. Finally, comparing confidence histograms and reliability diagrams in the manner of Section \ref{synthetic} shows the same behavior on the classification benchmarks: ERM + TS confidences cluster around 1 while Mixup confidences are more evenly distributed, contributing to improved calibration.


\section{Conclusion}
The key finding of our work is that when considering interpolating models on data distributions with overlaps, temperature-scaling-type methods alone can be insufficient for obtaining good calibration, both in theory and in practice. We show empirically that such interpolating models have very skewed confidence distributions (i.e. always predicting near 1), which re-affirms earlier empirical findings in the literature and suggests that temperature scaling is forced to push such models to be closer to random (as seen in Theorem \ref{generalerm}). On the other hand, we also show that minimizing the Mixup-augmented loss instead of the standard empirical risk can lead to good calibration even when the data distribution has significant overlaps. Towards this end, we introduce $d$-Mixup, and show that mixing more than two points with Mixup can in fact be beneficial for calibration due to the added regularization, particularly in high noise/overlap settings. One clear direction suggested by our work is the idea of developing better \textit{neighborhood constraints} around training points; we study the constraints introduced by Mixup, but we anticipate there are likely better alternatives for improving calibration that rely on non-linear constraints, and we leave this to future work.

\section*{Acknowledgements}
Rong Ge and Muthu Chidambaram are supported by NSF Award DMS-2031849, CCF-1845171
(CAREER), CCF-1934964 (Tripods), and a Sloan Research Fellowship. Muthu would like to thank
Kai Xu for helpful discussions during the early stages of this project.

\section*{Ethics Statement}
Due to the largely theoretical nature of this work, we do not anticipate any misuses of our results or code. That being said, we do note that our work further emphasizes the calibration issues that can exist with present day models, which is an important reminder for those using these models for critical use cases.

\section*{Reproducibility Statement}
All of the proofs and full assumptions for the theoretical results in this paper can be found in Appendix \ref{fullproofs}. Code used to generate all of the plots in this paper can be found in the associated Github Repository: \url{https://github.com/2014mchidamb/temp-scaling-limitations}.

\bibliography{iclr2024_conference}

\begin{thebibliography}{33}
\providecommand{\natexlab}[1]{#1}
\providecommand{\url}[1]{\texttt{#1}}
\expandafter\ifx\csname urlstyle\endcsname\relax
  \providecommand{\doi}[1]{doi: #1}\else
  \providecommand{\doi}{doi: \begingroup \urlstyle{rm}\Url}\fi

\bibitem[Bojarski et~al.(2016)Bojarski, Testa, Dworakowski, Firner, Flepp, Goyal, Jackel, Monfort, Muller, Zhang, Zhang, Zhao, and Zieba]{bojarski2016end}
Mariusz Bojarski, Davide~Del Testa, Daniel Dworakowski, Bernhard Firner, Beat Flepp, Prasoon Goyal, Lawrence~D. Jackel, Mathew Monfort, Urs Muller, Jiakai Zhang, Xin Zhang, Jake Zhao, and Karol Zieba.
\newblock End to end learning for self-driving cars, 2016.

\bibitem[Błasiok et~al.(2023)Błasiok, Gopalan, Hu, and Nakkiran]{błasiok2023unifying}
Jarosław Błasiok, Parikshit Gopalan, Lunjia Hu, and Preetum Nakkiran.
\newblock A unifying theory of distance from calibration, 2023.

\bibitem[Chidambaram et~al.(2021)Chidambaram, Wang, Hu, Wu, and Ge]{muthu1}
Muthu Chidambaram, Xiang Wang, Yuzheng Hu, Chenwei Wu, and Rong Ge.
\newblock Towards understanding the data dependency of mixup-style training.
\newblock \emph{CoRR}, abs/2110.07647, 2021.
\newblock URL \url{https://arxiv.org/abs/2110.07647}.

\bibitem[Clements et~al.(2020)Clements, Xu, Yousefi, and Efimov]{clements2020sequential}
Jillian~M. Clements, Di~Xu, Nooshin Yousefi, and Dmitry Efimov.
\newblock Sequential deep learning for credit risk monitoring with tabular financial data, 2020.

\bibitem[Ding et~al.(2020)Ding, Han, Liu, and Niethammer]{localtemp}
Zhipeng Ding, Xu~Han, Peirong Liu, and Marc Niethammer.
\newblock Local temperature scaling for probability calibration.
\newblock \emph{CoRR}, abs/2008.05105, 2020.
\newblock URL \url{https://arxiv.org/abs/2008.05105}.

\bibitem[Elmarakeby et~al.(2021)Elmarakeby, Hwang, Arafeh, Crowdis, Gang, Liu, AlDubayan, Salari, Kregel, Richter, et~al.]{elmarakeby2021biologically}
Haitham~A Elmarakeby, Justin Hwang, Rand Arafeh, Jett Crowdis, Sydney Gang, David Liu, Saud~H AlDubayan, Keyan Salari, Steven Kregel, Camden Richter, et~al.
\newblock Biologically informed deep neural network for prostate cancer discovery.
\newblock \emph{Nature}, 598\penalty0 (7880):\penalty0 348--352, 2021.

\bibitem[Esteva et~al.(2017)Esteva, Kuprel, Novoa, Ko, Swetter, Blau, and Thrun]{esteva2017dermatologist}
Andre Esteva, Brett Kuprel, Roberto~A Novoa, Justin Ko, Susan~M Swetter, Helen~M Blau, and Sebastian Thrun.
\newblock Dermatologist-level classification of skin cancer with deep neural networks.
\newblock \emph{nature}, 542\penalty0 (7639):\penalty0 115--118, 2017.

\bibitem[Esteva et~al.(2021)Esteva, Chou, Yeung, Naik, Madani, Mottaghi, Liu, Topol, Dean, and Socher]{esteva2021deep}
Andre Esteva, Katherine Chou, Serena Yeung, Nikhil Naik, Ali Madani, Ali Mottaghi, Yun Liu, Eric Topol, Jeff Dean, and Richard Socher.
\newblock Deep learning-enabled medical computer vision.
\newblock \emph{NPJ digital medicine}, 4\penalty0 (1):\penalty0 5, 2021.

\bibitem[Grigorescu et~al.(2020)Grigorescu, Trasnea, Cocias, and Macesanu]{grigorescu2020survey}
Sorin Grigorescu, Bogdan Trasnea, Tiberiu Cocias, and Gigel Macesanu.
\newblock A survey of deep learning techniques for autonomous driving.
\newblock \emph{Journal of Field Robotics}, 37\penalty0 (3):\penalty0 362--386, 2020.

\bibitem[Guo et~al.(2017)Guo, Pleiss, Sun, and Weinberger]{guocal2017}
Chuan Guo, Geoff Pleiss, Yu~Sun, and Kilian~Q. Weinberger.
\newblock On calibration of modern neural networks.
\newblock In \emph{Proceedings of the 34th International Conference on Machine Learning - Volume 70}, ICML'17, pp.\  1321–1330. JMLR.org, 2017.

\bibitem[He et~al.(2015)He, Zhang, Ren, and Sun]{resnet}
Kaiming He, Xiangyu Zhang, Shaoqing Ren, and Jian Sun.
\newblock Deep residual learning for image recognition.
\newblock \emph{CoRR}, abs/1512.03385, 2015.
\newblock URL \url{http://arxiv.org/abs/1512.03385}.

\bibitem[Jiang et~al.(2012)Jiang, Osl, Kim, and Ohno-Machado]{jiang2012calibrating}
Xiaoqian Jiang, Melanie Osl, Jihoon Kim, and Lucila Ohno-Machado.
\newblock Calibrating predictive model estimates to support personalized medicine.
\newblock \emph{Journal of the American Medical Informatics Association}, 19\penalty0 (2):\penalty0 263--274, 2012.

\bibitem[Kingma \& Ba(2015)Kingma and Ba]{adam}
Diederik~P. Kingma and Jimmy Ba.
\newblock Adam: {A} method for stochastic optimization.
\newblock In Yoshua Bengio and Yann LeCun (eds.), \emph{3rd International Conference on Learning Representations, {ICLR} 2015, San Diego, CA, USA, May 7-9, 2015, Conference Track Proceedings}, 2015.
\newblock URL \url{http://arxiv.org/abs/1412.6980}.

\bibitem[Kull et~al.(2019)Kull, Perell{\'{o}}{-}Nieto, K{\"{a}}ngsepp, de~Menezes~e Silva~Filho, Song, and Flach]{beyondtemp}
Meelis Kull, Miquel Perell{\'{o}}{-}Nieto, Markus K{\"{a}}ngsepp, Telmo de~Menezes~e Silva~Filho, Hao Song, and Peter~A. Flach.
\newblock Beyond temperature scaling: Obtaining well-calibrated multiclass probabilities with dirichlet calibration.
\newblock \emph{CoRR}, abs/1910.12656, 2019.
\newblock URL \url{http://arxiv.org/abs/1910.12656}.

\bibitem[Kumar et~al.(2018)Kumar, Sarawagi, and Jain]{mmce2018}
Aviral Kumar, Sunita Sarawagi, and Ujjwal Jain.
\newblock Trainable calibration measures for neural networks from kernel mean embeddings.
\newblock In Jennifer Dy and Andreas Krause (eds.), \emph{Proceedings of the 35th International Conference on Machine Learning}, volume~80 of \emph{Proceedings of Machine Learning Research}, pp.\  2805--2814. PMLR, 10--15 Jul 2018.
\newblock URL \url{https://proceedings.mlr.press/v80/kumar18a.html}.

\bibitem[Küppers et~al.(2020)Küppers, Kronenberger, Shantia, and Haselhoff]{Kueppers_2020_CVPR_Workshops}
Fabian Küppers, Jan Kronenberger, Amirhossein Shantia, and Anselm Haselhoff.
\newblock Multivariate confidence calibration for object detection.
\newblock In \emph{The IEEE/CVF Conference on Computer Vision and Pattern Recognition (CVPR) Workshops}, June 2020.

\bibitem[Lakshminarayanan et~al.(2017)Lakshminarayanan, Pritzel, and Blundell]{lakshminarayanan2017simple}
Balaji Lakshminarayanan, Alexander Pritzel, and Charles Blundell.
\newblock Simple and scalable predictive uncertainty estimation using deep ensembles, 2017.

\bibitem[Minderer et~al.(2021)Minderer, Djolonga, Romijnders, Hubis, Zhai, Houlsby, Tran, and Lucic]{revisitcal21}
Matthias Minderer, Josip Djolonga, Rob Romijnders, Frances Hubis, Xiaohua Zhai, Neil Houlsby, Dustin Tran, and Mario Lucic.
\newblock Revisiting the calibration of modern neural networks.
\newblock In M.~Ranzato, A.~Beygelzimer, Y.~Dauphin, P.S. Liang, and J.~Wortman Vaughan (eds.), \emph{Advances in Neural Information Processing Systems}, volume~34, pp.\  15682--15694. Curran Associates, Inc., 2021.
\newblock URL \url{https://proceedings.neurips.cc/paper_files/paper/2021/file/8420d359404024567b5aefda1231af24-Paper.pdf}.

\bibitem[Mukhoti et~al.(2020)Mukhoti, Kulharia, Sanyal, Golodetz, Torr, and Dokania]{mukhoti2020calibrating}
Jishnu Mukhoti, Viveka Kulharia, Amartya Sanyal, Stuart Golodetz, Philip H.~S. Torr, and Puneet~K. Dokania.
\newblock Calibrating deep neural networks using focal loss, 2020.

\bibitem[Müller et~al.(2020)Müller, Kornblith, and Hinton]{muller2020does}
Rafael Müller, Simon Kornblith, and Geoffrey Hinton.
\newblock When does label smoothing help?, 2020.

\bibitem[Nixon et~al.(2020)Nixon, Dusenberry, Jerfel, Nguyen, Liu, Zhang, and Tran]{nixon2020measuring}
Jeremy Nixon, Mike Dusenberry, Ghassen Jerfel, Timothy Nguyen, Jeremiah Liu, Linchuan Zhang, and Dustin Tran.
\newblock Measuring calibration in deep learning, 2020.

\bibitem[Ovadia et~al.(2019)Ovadia, Fertig, Ren, Nado, Sculley, Nowozin, Dillon, Lakshminarayanan, and Snoek]{ovadia2019}
Yaniv Ovadia, Emily Fertig, Jie Ren, Zachary Nado, D.~Sculley, Sebastian Nowozin, Joshua Dillon, Balaji Lakshminarayanan, and Jasper Snoek.
\newblock Can you trust your model\textquotesingle s uncertainty? evaluating predictive uncertainty under dataset shift.
\newblock In H.~Wallach, H.~Larochelle, A.~Beygelzimer, F.~d\textquotesingle Alch\'{e}-Buc, E.~Fox, and R.~Garnett (eds.), \emph{Advances in Neural Information Processing Systems}, volume~32. Curran Associates, Inc., 2019.
\newblock URL \url{https://proceedings.neurips.cc/paper_files/paper/2019/file/8558cb408c1d76621371888657d2eb1d-Paper.pdf}.

\bibitem[Paszke et~al.(2019)Paszke, Gross, Massa, Lerer, Bradbury, Chanan, Killeen, Lin, Gimelshein, Antiga, Desmaison, K{\"{o}}pf, Yang, DeVito, Raison, Tejani, Chilamkurthy, Steiner, Fang, Bai, and Chintala]{pytorch}
Adam Paszke, Sam Gross, Francisco Massa, Adam Lerer, James Bradbury, Gregory Chanan, Trevor Killeen, Zeming Lin, Natalia Gimelshein, Luca Antiga, Alban Desmaison, Andreas K{\"{o}}pf, Edward~Z. Yang, Zach DeVito, Martin Raison, Alykhan Tejani, Sasank Chilamkurthy, Benoit Steiner, Lu~Fang, Junjie Bai, and Soumith Chintala.
\newblock Pytorch: An imperative style, high-performance deep learning library.
\newblock \emph{CoRR}, abs/1912.01703, 2019.
\newblock URL \url{http://arxiv.org/abs/1912.01703}.

\bibitem[Rahaman \& Thiery(2021)Rahaman and Thiery]{rahaman2021uncertainty}
Rahul Rahaman and Alexandre~H. Thiery.
\newblock Uncertainty quantification and deep ensembles, 2021.

\bibitem[Srivastava et~al.(2014)Srivastava, Hinton, Krizhevsky, Sutskever, and Salakhutdinov]{dropout}
Nitish Srivastava, Geoffrey Hinton, Alex Krizhevsky, Ilya Sutskever, and Ruslan Salakhutdinov.
\newblock Dropout: A simple way to prevent neural networks from overfitting.
\newblock \emph{Journal of Machine Learning Research}, 15\penalty0 (56):\penalty0 1929--1958, 2014.
\newblock URL \url{http://jmlr.org/papers/v15/srivastava14a.html}.

\bibitem[Thulasidasan et~al.(2019)Thulasidasan, Chennupati, Bilmes, Bhattacharya, and Michalak]{thulasidasan2019mixup}
Sunil Thulasidasan, Gopinath Chennupati, Jeff~A Bilmes, Tanmoy Bhattacharya, and Sarah Michalak.
\newblock On mixup training: Improved calibration and predictive uncertainty for deep neural networks.
\newblock \emph{Advances in Neural Information Processing Systems}, 32:\penalty0 13888--13899, 2019.

\bibitem[Wang et~al.(2021)Wang, Feng, and Zhang]{focalloss2021}
Deng-Bao Wang, Lei Feng, and Min-Ling Zhang.
\newblock Rethinking calibration of deep neural networks: Do not be afraid of overconfidence.
\newblock In M.~Ranzato, A.~Beygelzimer, Y.~Dauphin, P.S. Liang, and J.~Wortman Vaughan (eds.), \emph{Advances in Neural Information Processing Systems}, volume~34, pp.\  11809--11820. Curran Associates, Inc., 2021.
\newblock URL \url{https://proceedings.neurips.cc/paper_files/paper/2021/file/61f3a6dbc9120ea78ef75544826c814e-Paper.pdf}.

\bibitem[Wen et~al.(2020)Wen, Tran, and Ba]{wen2020batchensemble}
Yeming Wen, Dustin Tran, and Jimmy Ba.
\newblock Batchensemble: An alternative approach to efficient ensemble and lifelong learning, 2020.

\bibitem[Wen et~al.(2021)Wen, Jerfel, Muller, Dusenberry, Snoek, Lakshminarayanan, and Tran]{wen2021combining}
Yeming Wen, Ghassen Jerfel, Rafael Muller, Michael~W. Dusenberry, Jasper Snoek, Balaji Lakshminarayanan, and Dustin Tran.
\newblock Combining ensembles and data augmentation can harm your calibration, 2021.

\bibitem[Wightman et~al.(2021)Wightman, Touvron, and Jégou]{wightman2021resnet}
Ross Wightman, Hugo Touvron, and Hervé Jégou.
\newblock Resnet strikes back: An improved training procedure in timm, 2021.

\bibitem[Xie et~al.(2016)Xie, Girshick, Doll{\'{a}}r, Tu, and He]{resnext}
Saining Xie, Ross~B. Girshick, Piotr Doll{\'{a}}r, Zhuowen Tu, and Kaiming He.
\newblock Aggregated residual transformations for deep neural networks.
\newblock \emph{CoRR}, abs/1611.05431, 2016.
\newblock URL \url{http://arxiv.org/abs/1611.05431}.

\bibitem[Zhang et~al.(2017)Zhang, Ciss{\'{e}}, Dauphin, and Lopez{-}Paz]{mixup}
Hongyi Zhang, Moustapha Ciss{\'{e}}, Yann~N. Dauphin, and David Lopez{-}Paz.
\newblock mixup: Beyond empirical risk minimization.
\newblock \emph{CoRR}, abs/1710.09412, 2017.
\newblock URL \url{http://arxiv.org/abs/1710.09412}.

\bibitem[Zhang et~al.(2021)Zhang, Deng, Kawaguchi, and Zou]{mixcal}
Linjun Zhang, Zhun Deng, Kenji Kawaguchi, and James Zou.
\newblock When and how mixup improves calibration.
\newblock \emph{CoRR}, abs/2102.06289, 2021.
\newblock URL \url{https://arxiv.org/abs/2102.06289}.

\end{thebibliography}
\bibliographystyle{iclr2024_conference}

\newpage
\appendix

\section{Full Proofs}\label{fullproofs}
Here we provide full proofs of all results in Section \ref{maintheory}. For convenience, we first recall all of the relevant definitions and assumptions from Sections \ref{prelim} and \ref{maintheory}.

\interpolator*
\regularity*
\mixinterpol*
\simpledist*
\generaldist*

\subsection{Proofs of Proposition \ref{warmuperm} and Theorem \ref{generalerm}}\label{ermproofs}
\warmuperm*

\begin{proof}
We begin by first verifying that the set $\Sp$ is large with high probability. Firstly, there are uncountably many strong ERM interpolators $g$, since the interpolator constraint is only on the finitely many points in $\X$ (so behavior is unconstrained elsewhere on the input space), and with probability 1 these points do not coincide.

In order to show that $k/(4N)$-regularity is feasible over each overlapping region, let us focus on a single class $y < k$ with $\tau(y) = 1$ (i.e. the class $y$ is odd-numbered). In this case, the overlapping region is $[\beta_y + \alpha, \beta_y + 1]$ (see Definition \ref{simpledist}). By a Chernoff bound, there are $\Theta((1 - \alpha) N/k)$ class $y$ points in this region with probability $1 - \exp(-\Omega((1 - \alpha) N/k))$. 

Now conditioning on each class $y$ point $x$ in the region, the probability that a class $y + 1$ point falls in a $k/(4N)$-neighborhood of $x$ is at most $1/(2N)$ (since the class-conditional density of the class $y + 1$ points is uniform over $[\beta_y + \alpha, \beta_y + 1 + \alpha]$, and the class prior is uniform over $[k]$). Thus, by a union bound the probability that $x$ has no class $y + 1$ neighbors is at least $1/2$. Now by another Chernoff bound, there are $\Theta((1 - \alpha) N/k)$ class $y$ points with no $k/(4N)$-neighborhood neighbors with probability at least $1 - \exp(-\Omega((1 - \alpha) N/k))$, which allows us to estbalish $k/(4N)$-regularity over $[\beta_y + \alpha, \beta_y + 1]$. Repeating this logic for each overlapping region (conditioning appropriately) and taking a union bound shows that the set $\Sp$ is large with probability $1 - k\exp(-\Omega((1 - \alpha) N/k))$.

With $k/(4N)$-regularity established, our approach is straightforward: we will lower bound the expected KL divergence by considering just the divergence over the regions consisting of the unions of the regularity neighborhoods, over which temperature scaling can at best bring the softmax outputs to uniform over $k$ (i.e. each output is $1/k$). However, the ground truth conditional distribution over the regularity neighborhoods is uniform over only two classes (since the neighborhoods are in a region of overlap between two classes). 

As before, for simplicity we will focus on a single class since the argument that follows can be iterated for each subsequent class. To keep notation brief, let us define:
\begin{align*}
    H_{\pi}(\hat{\pi} \Ind_A) = \E_{\pi(Y \mid X)} [-\Ind_A \log \hat{\pi}(Y \mid X)] \numberthis \label{eqn:entropydef}
\end{align*}

In other words, the cross-entropy restricted to a region $A$. Now consider a class $1 < y < k$ with $\tau(y) = 0$ (even parity); over its support $[\beta_y, \beta_y + 1]$ we observe that:
\begin{align*}
    H_{\pi}(\pi \Ind_{[\beta_y, \beta_y + 1]}) &= -\frac{1}{k} \int_{\beta_y}^{\beta_y + 1} \sum_{y \in [k]} \pi(Y = y \mid X = x) \log \pi(Y = y \mid X = x) \ dx \\
    &= \frac{\log 2}{k} \numberthis \label{eqn:optentropy}
\end{align*}
So the entropy over the support of class $y$ of the ground truth conditional distribution is a constant divided by $k$. We will now show that the cross-entropy over the same region for the temperature-scaled predictive distribution $\hat{\pi}_T(Y \mid X)$ is lower bounded by $\Theta((1 - \alpha) \log(k)/k)$.

To do so, we will constrain our attention as mentioned earlier to the entropy over just the regions specified by $k/(4N)$-regularity. Let $U$ denote the union of all of the $k/(4N)$-confidence neighborhoods around class $y-1$ points in the region $[\beta_y, \beta_y + 1]$. Note that we are focusing on the neighborhoods around class $y-1$ points (this corresponds to the odd-numbered class in our discussion above), because in these neighborhoods the class $y$ softmax output will be extremely small (due to the interpolation property applied to $g$ on the class $y-1$ points). We have that:
\begin{align*}
    H_{\pi}(\hat{\pi}_T\Ind_{[\beta_y, \beta_y + 1]}) \geq -\frac{1}{k} \min_{T > 0} \int_U \frac{1}{2} \log \SM^y(g(x)/T) \ dx \numberthis \label{eqn:predentropy}
\end{align*}

Now we can lower bound the integrand in Equation \eqref{eqn:predentropy} at each point $(x_i, y-1) \in \X$ with $x_i \in U$ as:
\begin{align*}
    -\log \SM^y(g(x_i)/T) &= -\log \frac{\exp(g^y(x_i)/T)}{\exp(g^{(y - 1)}(x_i/T)) + \sum_{s \neq (y - 1)} \exp(g^{s}(x_i)/T)} \\
    &\geq -\log \frac{\exp(g^y(x_i)/T)}{\exp(g^{(y - 1)}(x_i/T)) + (k - 1) \exp((g^y(x_i) - C_i)/T)} \numberthis \label{eqn:smaxlb}
\end{align*}

Where $C_i$ above is from the ERM interpolation property of $g$. Applying $k/(4N)$-regularity we can translate the bound in Equation \eqref{eqn:smaxlb} to all $x \in U$ at the cost of introducing a $O(k/N)$ correction to $g^{(y - 1)}(x_i/T)$ and a $O(1/k)$ correction to $\pi_X(U)$. The former error term will be irrelevant to the calculation (asymptotically), while the latter error term will show up in the final bound. Letting $\X_U$ denote all points $x_i \in U$ with $(x_i, y - 1) \in \X$ and recalling that $\mu_1(U)$ denotes the 1-dimensional Lebesgue measure of the set $U$, we then obtain:
\begin{align*}
    H_{\pi}&(\hat{\pi}_T\Ind_{[\beta_y, \beta_y + 2]}) \geq \\
    &\Theta\left(-\frac{\mu_1(U) - O(1/k)}{k} \max_{x_i \in \X_U} \log \frac{\exp(g^y(x_i)/T)}{\exp(g^{(y - 1)}(x_i/T)) + (k - 1) \exp((g^y(x_i) - C_i)/T)}\right) \numberthis \label{eqn:entropylb}
\end{align*}

Now we consider two cases; $T < 1$ and $T \geq 1$. In the former case, since $\exp(g^{(y - 1)}(x_i)) \geq k \exp(g^y(x_i))$, we have that the term inside the $\log$ is bounded above by $1/k$. In the latter case, since $C_i$ is a constant, the term inside the $\log$ is also bounded above by $1/k$. Thus we get:
\begin{align*}
    H_{\pi}(\hat{\pi}_T\Ind_{[\beta_y, \beta_y + 1]}) \geq \Theta\left(\frac{(\mu_1(U) - O(1/k)) \log k}{k}\right) \numberthis \label{eqn:entropylb2}
\end{align*}

Finally, we observe that $U$ consists of the union of $\Theta((1 - \alpha) N/k)$ neighborhoods of size $\Theta(k/N)$, so $\mu_1(U) = \Theta(1 - \alpha)$. Iterating this argument over each of the $\lfloor k/2 \rfloor$ overlapping regions, we obtain the desired result.
\end{proof}

As mentioned, the proof of the more general result follows the proof of Proposition \ref{warmuperm} extremely closely, and we thus recommend reading the above proof before proceeding to this one (as we skip some details).
\generalerm*

\begin{proof}
As in the proof of Proposition \ref{warmuperm}, there are uncountably many ERM interpolators for $\X$. Now for showing that there is a subset of such interpolators satisfying uniform $r$-confidence over overlapping regions with high probability, let us consider a class $y$ whose support overlaps with the supports of other classes $s_1(y), ..., s_{m}(y)$ (as defined in Definition \ref{generaldist}).

Let $A \subset \supp(\pi_y)$ denote the overlap between the support of $y$ and an arbitrary one of the aforementioned classes $s_i(y)$. As before, by a Chernoff bound, there are $\Omega((1 - \alpha) N/k)$ class $y$ points falling in $A$ with probability at least $1 - \exp(-\Omega((1 - \alpha) N/k))$, since by assumption $\pi_y (A) \geq 1 - \alpha$.  

Conditioning on each class $y$ point $x$ in $A$, the probability that a point from classes $s_1(y), ..., s_{m}(y)$ falls in an $r$-neighborhood of $x$ is at most $m/(2MN)$. As before, this implies by a union bound and another Chernoff bound that there are $\Theta((1 - \alpha) N/k)$ class $y$ points in the overlapping region with no $r$-neighborhood neighbors with probability at least $1 - \exp(-\Omega((1 - \alpha) N/k))$. Repeating this logic and taking another union bound, it follows that the set $\Sp$ is large with probability at least $1 - k \exp(-\Omega((1 - \alpha) N/k))$.

The rest of the proof carries over similarly from the proof of Proposition \ref{warmuperm}. Let $y$ be any class with overlaps, as considered above. Similarly, let $s$ be any class overlapping with $y$ whose region of overlap we denote as $A$, once again as above. Defining $H_{\pi}(\cdot)$ as in Equation \eqref{eqn:entropydef}, we see that:
\begin{align*}
    H_{\pi}\left(\pi \Ind_{\supp(\pi_{s})}\right) = \Theta\left(\frac{1}{k}\right) \numberthis \label{eqn:genentropy}
\end{align*}

Now letting $U$ denote the union of all $r$-neighborhoods around class $y$ points in the overlapping region $A$, we also get:
\begin{align*}
    H_{\pi}(\hat{\pi}_T\Ind_{\supp(\pi_s)}) \geq -\Theta\left(\frac{1}{k} \min_{T > 0} \int_U \log \SM^s(g(x)/T) \ dx\right) \numberthis \label{eqn:genpredentropy}
\end{align*}

As in the proof of Proposition \ref{warmuperm}, we can lower bound the integrand in Equation \eqref{eqn:genpredentropy} at each point $(x_i, y) \in \X$ with $x_i \in U$ as:
\begin{align*}
    -\log \SM^s(g(x_i)/T) \geq -\log \frac{\exp(g^s(x_i)/T)}{\exp(g^{y}(x_i/T)) + (k - 1) \exp((g^s(x_i) - C_i)/T)} \numberthis \label{eqn:gensmaxlb}
\end{align*}

Once again letting $\X_U$ denote all points $x_i \in U$ with $(x_i, y) \in \X$, we obtain from $r$-regularity:
\begin{align*}
    H_{\pi}(\hat{\pi}_T\Ind_{\pi_s}) \geq \Theta\left(-\frac{\mu_d(U) - O(1/k)}{k} \max_{x_i \in \X_U} \log \frac{\exp(g^s(x_i)/T)}{\exp(g^{y}(x_i/T)) + (k - 1) \exp((g^s(x_i) - C_i)/T)}\right) \numberthis \label{eqn:genentropylb}
\end{align*}

Which, by identical logic to the proof of Proposition \ref{warmuperm}, gives:
\begin{align*}
    H_{\pi}(\hat{\pi}_T\Ind_{\pi_s}) \geq \Theta\left(\frac{(\mu_d(U) - O(1/k)) \log k}{k}\right) \numberthis \label{eqn:genentropylb2}
\end{align*}

Observing again that $\mu_d(U) = \Omega(1 - \alpha)$ and applying this logic to all $\Theta(k)$ overlapping regions in $\supp(\pi_X)$, we obtain the desired result.
\end{proof}

\subsection{Mixup Optimality Lemma and Proofs of Proposition \ref{warmupmix} and Theorem \ref{generalmix}}\label{mixproofs}

First, we provide a proper definition of the mixing set $\M_d(\X)$, which was intuitively described in Section \ref{mixup} of the main paper.
\begin{restatable}{definition}{mixset}\label{mixset}
    Given a dataset $\X$, we define:
    \begin{align*}
        \M_d(\X) = \bigg\{ \sigma \in [N]^{d + 1}:\ &x_{\sigma_i} - x_{\sigma_{d + 1}} \neq 0 \quad \forall i \in [d] \\
        &\text{and} \quad \frac{\inrprod{x_{\sigma_i} - x_{\sigma_{d + 1}}}{x_{\sigma_j} - x_{\sigma_{d + 1}}}}{\norm{x_{\sigma_i} - x_{\sigma_{d + 1}}}\norm{x_{\sigma_j} - x_{\sigma_{d + 1}}}} \leq \frac{1}{2d} \quad \forall i, j \in [d], \ i \neq j \\ 
        &\text{and} \quad \norm{x_{\sigma_i} - x_{\sigma_j}} \leq C(\X) \quad \forall i, j \in [d + 1] \bigg\}
    \end{align*}
    Where $C(\X)$ is a possibly data-dependent constant.
\end{restatable}
We recall that the definition of $\X_{\mix, d}$ is all $z$ such that $z = \sum_{j \in [d + 1]} \lambda_j x_{\sigma_j}$ for some $\lambda \in \supp(\D_{\lambda, d})$ and $\sigma \in \M_d(\X)$. Now we prove a formal version of Lemma \ref{infdmixopt}, which we will rely on in the proofs of Proposition \ref{warmupmix} and Theorem \ref{generalmix}.

\begin{restatable}{lemma}{dmixopt}\label{dmixopt}
    Let $f_{\lambda}$ denote the density of $\D_{\lambda, d}$ and define:
    \begin{align*}
        L_{\sigma} &= \left[\begin{array}{cccc}
        \vrule & \vrule & & \vrule \\
        x_{\sigma_1} - x_{\sigma_{d + 1}} & x_{\sigma_2} - x_{\sigma_{d + 1}} & \ldots & x_{\sigma_d} - x_{\sigma_{d + 1}} \\
        \vrule & \vrule & & \vrule
        \end{array}\right] \numberthis \label{eqn:lmatrix} \\
        \Lambda_{\sigma}(z) &= \left[L_{\sigma}^{-1}(z - x_{\sigma_{d + 1}}) \quad 1 - \sum_{j \in [d]} L_{\sigma}^{-1}(z - x_{\sigma_{d + 1}})_j\right] \numberthis \label{eqn:laminv} \\
        \xi_y(z) &= \sum_{\sigma \in \M_d(\X)} \Ind_{z \in \conv\left(\{x_{\sigma_i}\}_{i = 1}^{d + 1}\right)} \abs{\det(L_{\sigma}^{-1})} f_{\lambda}\left(\Lambda_{\sigma}(z)\right) \sum_{j:\ y_{\sigma_j} = y} \Lambda_{\sigma}(z)_j \numberthis \label{eqn:dmixclass}
    \end{align*}
    Then any $g^* \in \arginf_g J_{\mix, d}(g, \X, \D_{\lambda, d})$ satisfies $\SM^y(g^*(z)) = \xi_y(z)/\sum_{s \in [k]} \xi_s(z)$ for almost every $z \in \X_{\mix, d}$.
\end{restatable}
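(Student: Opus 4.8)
The plan is to reduce the minimization of $J_{\mix, d}$ to a \emph{pointwise} problem: rewrite the objective as a single integral over the input space $\R^d$ in which the prediction $\SM(g(z))$ enters only through its value at the point $z$, and then minimize the integrand separately at each $z$. The reparameterization alluded to in Section~\ref{mixup} is precisely the change of variables $\lambda \mapsto z = z_\sigma(\lambda)$ that makes this decoupling possible, and the functions $\Lambda_\sigma$, $L_\sigma$, $\xi_y$ in the statement are exactly the bookkeeping needed to aggregate the contributions of all mixing sets $\sigma$ that can produce a given $z$.

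First I would fix $\sigma \in \M_d(\X)$ and record that the matrix $L_\sigma$ of Equation~\eqref{eqn:lmatrix} is invertible: the defining conditions of $\M_d(\X)$ force the normalized Gram matrix of the vectors $x_{\sigma_i} - x_{\sigma_{d+1}}$ ($i \in [d]$) to have unit diagonal and off-diagonal entries bounded by $1/(2d)$, hence to be strictly diagonally dominant and positive definite, so these $d$ vectors are linearly independent. Writing $z - x_{\sigma_{d+1}} = L_\sigma (\lambda_1, \dots, \lambda_d)^\top$ and $\lambda_{d+1} = 1 - \sum_{j \le d} \lambda_j$ identifies $\lambda$ with $\Lambda_\sigma(z)$ of Equation~\eqref{eqn:laminv}, a diffeomorphism from the interior of $\Delta^d$ onto the interior of $\conv(\{x_{\sigma_i}\}_{i=1}^{d+1})$ with constant Jacobian $\abs{\det(L_\sigma^{-1})}$. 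Applying this change of variables to $\E_{\lambda \sim \D_{\lambda, d}}[\,\cdot\,]$ turns the $\sigma$-summand of $J_{\mix, d}$ into $\int \Ind_{z \in \conv(\{x_{\sigma_i}\})} \abs{\det(L_\sigma^{-1})}\, f_\lambda(\Lambda_\sigma(z)) \sum_{j \in [d+1]} \Lambda_\sigma(z)_j \log \SM^{y_{\sigma_j}}(g(z))\, dz$. Summing over the finitely many $\sigma$, interchanging the sum with the integral, and collecting the coefficients of $\log \SM^y(g(z))$ by grouping indices $j$ with $y_{\sigma_j} = y$, I obtain $J_{\mix, d}(g, \X, \D_{\lambda, d}) = -\tfrac{1}{\abs{\M_d(\X)}} \int_{\R^d} \sum_{y \in [k]} \xi_y(z) \log \SM^y(g(z))\, dz$ with $\xi_y$ exactly as in Equation~\eqref{eqn:dmixclass}; note that $\xi_y(z) = 0$ for every $y$ when $z \notin \X_{\mix, d}$, while for a.e. $z \in \X_{\mix, d}$ at least one $\xi_y(z) > 0$ because such a $z$ lies in some $\conv(\{x_{\sigma_i}\})$ at a point where $f_\lambda \circ \Lambda_\sigma$ is positive.

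Now the integrand depends on $g$ only through the probability vector $p(z) = \SM(g(z)) \in \Delta^k$, so it suffices to minimize $z \mapsto -\sum_y \xi_y(z) \log p_y$ pointwise. Writing $\Xi(z) = \sum_s \xi_s(z)$ and $q_y(z) = \xi_y(z)/\Xi(z)$, this quantity equals $\Xi(z)\big(d_{\KL}(q(z), p) - \sum_y q_y(z)\log q_y(z)\big)$, so by non-negativity of the KL divergence (Gibbs' inequality) the unique minimizing $p$ is $p = q(z)$, i.e. $\SM^y(g^*(z)) = \xi_y(z)/\sum_s \xi_s(z)$. A matching $g^*$ exists in the class of extended-real-valued functions: take $g^{*y}(z) = \log q_y(z)$ (with $\log 0 = -\infty$ wherever some $\xi_y(z) = 0$), which is measurable and attains the pointwise minimum for a.e. $z \in \X_{\mix, d}$; off $\X_{\mix, d}$ the integrand vanishes identically, so $g^*$ is unconstrained there, which is also why $\arginf_g J_{\mix, d}$ is uncountable and why the conclusion holds only for almost every $z \in \X_{\mix, d}$.

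The step I expect to be the main obstacle is making the reduction ``$\inf_g \int F(z, \SM(g(z)))\, dz = \int \inf_{p \in \Delta^k} F(z, p)\, dz$'' fully rigorous: the $\geq$ direction is immediate, but the $\leq$ direction requires a measurable selection of the pointwise minimizer (here explicit, namely $q(z)$, so this is mild) together with care that $F(z, q(z))$ is integrable — in particular handling the $0 \cdot (-\infty)$ terms via the convention $0 \log 0 = 0$ and checking that the objective is finite for at least one $g$ (e.g. $g \equiv 0$ gives value $\log k$, so the infimum is finite and attained rather than $-\infty$). Secondary technical points are justifying the change of variables on the full-dimensional pieces while discarding the measure-zero boundaries of the simplices and convex hulls, and confirming that each $\xi_y$ is measurable so that $g^*$ is a legitimate extended-valued function. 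None of these is deep, but they are where the ``non-asymptotic, no restriction on the function class'' strength of the lemma has to be earned.
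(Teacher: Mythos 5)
Your proposal is correct and follows essentially the same route as the paper's proof: change variables via $z = \sum_j \lambda_j x_{\sigma_j}$ using the invertibility of $L_\sigma$ guaranteed by $\M_d(\X)$, apply Fubini to rewrite $J_{\mix, d}$ as a single integral over $\X_{\mix, d}$ whose integrand depends on $g$ only through $\SM(g(z))$, and minimize pointwise to obtain $\SM^y(g^*(z)) = \xi_y(z)/\sum_s \xi_s(z)$. The only (immaterial) difference is that you close the pointwise step with Gibbs' inequality where the paper invokes strict convexity and Lagrange multipliers, and you are somewhat more explicit about the measurability and $0\log 0$ caveats.
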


\begin{proof}
    First, writing out the expectation in Equation \eqref{eqn:dmixce}, we have:
    \begin{align*}
    J_{\mix, d}(g, \X, \D_{\lambda, d}) &= -\frac{1}{\abs{\M_d(\X)}} \sum_{\sigma \in \M_d(\X)} \int_{\Delta^d} \sum_{i \in [d + 1]} \lambda_i \log \SM^{y_{\sigma_i}}\left(g\left(\sum_{j \in [d + 1]} \lambda_j x_{\sigma_j}\right)\right) f(\lambda) \ d\lambda \numberthis \label{eqn:dmixce2}
    \end{align*}
    Now we make the substitution $z = \sum_{j \in [d + 1]} \lambda_j x_{\sigma_j}$. By the definition of $\M_d(\X)$, the transformation $L_{\sigma}$ in Equation \eqref{eqn:lmatrix} is invertible, so $\Lambda_{\sigma}(z)$ from Equation \eqref{eqn:laminv} is well-defined. Now applying Fubini's Theorem, we may write:
    \begin{align*}
        \ell(\sigma, z) &= -\abs{\det(L_{\sigma}^{-1})} f_{\lambda}\left(\Lambda_{\sigma}(z)\right) \sum_{j \in [d + 1]} \Lambda_{\sigma}(z)_j \log \SM^{y_{\sigma_j}}(g(z)) \numberthis \label{eqn:dmixell} \\
        J_{\mix, d}(g, \X, \D_{\lambda, d}) &= \frac{1}{\abs{\M_d(\X)}} \int_{\X_{\mix, d}} \sum_{\sigma \in \M_d(\X)} \Ind_{z \in \conv\left(\{x_{\sigma_i}\}_{i = 1}^{d + 1}\right)} \ \ell(\sigma, z) \ dz \numberthis \label{eqn:dmixce3}
    \end{align*}
    We observe that $\sum_{\sigma \in \M_d(\X)} \Ind_{z \in \conv\left(\{x_{\sigma_i}\}_{i = 1}^{d + 1}\right)} \ell(\sigma, z)$ is strictly convex as a function of $\SM(g(z))$. Now we can optimize the aforementioned term as a function of $\SM(g(z))$ under the constraint that $\sum_{y \in [k]} \SM^y(g(z)) = 1$. This is straightforward to do with Lagrange multipliers, and the solution is of the form $\SM^y(g(z)) = \xi_y(z)/\sum_{s \in [k]} \xi_s(z)$ where $\xi_y(z)$ is defined as in Equation \eqref{eqn:dmixclass}, so the result follows.
\end{proof}

With Lemma \ref{dmixopt}, we can prove Proposition \ref{warmupmix}.
\warmupmix*
\begin{proof}
    We first partition $\supp(\pi_X)$ into subregions each of which have measure $\Theta(1/k^2)$ (the reason for this choice will be made clear shortly), so that there are $\Theta(k^3)$ total subregions. By a Chernoff bound, there are $\Theta(N/k^3)$ points in each of these subregions with probability at least $1 - \exp(-\Omega(N/k^3))$. Thus, by a union bound, the probability that every subregion contains $\Theta(N/k^3)$ points is at least $1 - k^3 \exp(-\Omega(N/k^3))$. Consequently, with the same probability, $\pi_X(\supp(\pi_X) \setminus \X_{\mix, d}) = O(1/k^2)$. To see this, observe that for $z$ to be in $\X_{\mix, d}$, we need only have one point to the left of $z$ and one point to the right of $z$, both within a constant distance of one another (note that the other conditions in $\M_d(\X)$ are vacuously satisfied since we are in dimension 1). By our high probability arguments, there is at most some $O(1/k^2)$ Lebesgue measure region in each pair of overlapping class intervals with no points in $\X$, from which it follows that $\pi_X(\supp(\pi_X) \setminus \X_{\mix, d}) = O(1/k^2)$.

    Now for $y \in [k]$, consider $z \in \supp(\pi_y) \cap \X_{\mix, d}$. We will show that every $1$-Mixup interpolator $g$ for $\X$ with $\D_{\lambda, 1}$ being uniform satisfies $\SM^y(g(z)) \in [\Theta(1), 1]$ when $z$ falls only in the support of $y$ and $\SM^y(g(z)) \in [\Theta(1), 1 - \Theta(1)]$ in the overlapping region, which will be sufficient for showing the desired result. 

    We first handle the non-overlapping region of $\supp(\pi_y)$. Here we need to consider the effect of the separation parameter $\alpha$; if $\alpha = O(1/k)$ (i.e. the classes have very high overlap), the behavior in this region will be asymptotically irrelevant to the final KL-divergence calculation (since it will be a $O(\log(k)/k)$ term). On the other hand, if $\alpha = \Omega(1/k)$, then our partitioning of the input space was fine enough that we may claim from our high probability arguments that there are $\Theta(\alpha N/k)$ class $y$ points in some subregion $[\beta_y, \beta_y + \Theta(\alpha)]$. Similar logic also applies when considering the overlapping region, but this time the aforementioned logic is applied to $1 - \alpha$. Thus, in what follows, we will assume we are in the regime of $\alpha \in [\Omega(1/k), 1- \Omega(1/k)]$.

    Suppose without loss of generality that $\tau(y) = 1$ and that $y < k$ (i.e. $y$ is an odd-numbered class), and consider $z \in [\beta_y, \beta_y + \alpha) \cap \X_{\mix, d}$ (this is, as mentioned, the non-overlapping part of $\supp(\pi_y)$). Clearly $z$ can only be obtained from $\sigma \in \M_d(\X)$ with $y_{\sigma_1} = y$ or $y_{\sigma_2} = y$, since the other class supports are spaced $k$ away and $z \in \supp(\pi_y) \setminus \supp(\pi_{y + 1})$ (i.e. we have to mix with class $y$). This implies that for $g^* \in \arginf_g J_{\mix, 1}(g, \X, \D_{\lambda, 1})$, we have $\xi_s(z) = 0$ for $s \neq y, \ y + 1$ (where $\xi_y$ is as in Equation \eqref{eqn:dmixclass}). Now we consider two sub-cases: $z \in [\beta_y, \beta_y + \alpha/2)$ and $z \in [\beta_y + \alpha/2, \beta_y + \alpha)$. 
    
    In the first sub-case, if $y_{\sigma_j} = y$ then $\lambda_j \geq 1/2$ since $z = \lambda_1 x_{\sigma_1} + (1 - \lambda_1) x_{\sigma_2}$ must necessarily fall closer to whichever class $y$ point produced it, which implies that $\xi_y(z) \geq \xi_{y + 1}(z)$. 
    
    For the latter sub-case, this is no longer true, since some points in $[\beta_y + \alpha/2, \beta_y + \alpha)$ may fall closer to class $y + 1$. However, by our initial arguments there are $\Theta(\alpha N/k)$ class $y$ points in $[\beta_y, \beta_y + \alpha/2)$, which implies there are $\Theta(\alpha N^2/k^2)$ choices of $\sigma \in \M_d(\X)$ which can produce $z \in [\beta_y + \alpha/2, \beta_y + \alpha)$ with a constant weight associated with class $y$. Since there are $O(\alpha N^2/k^2)$ total choices of $\sigma \in \M_d(\X)$ that can produce $z \in [\beta_y + \alpha/2, \beta_y + \alpha)$ (since we need to mix with a point to the left of $z$), we get that $\xi_y(z) = \Omega(\xi_{y + 1}(z))$ in this case. Now from Definition \ref{mixinterpol}, it immediately follows that every $1$-Mixup interpolator $g$ satisfies $\SM^y(g(z)) \in [\Theta(1), 1]$ for $z \in [\beta_y, \beta_y + \alpha)$.

    Let us now consider $z \in [\beta_y + \alpha, \beta_y + 1] \cap \X_{\mix, d}$ (the overlapping part of $\supp(\pi_y)$). We can apply similar logic to that used in handling the sub-case of $z \in [\beta_y + \alpha/2, \beta_y + \alpha)$ considered above; namely, for every $z \in [\beta_y + \alpha, \beta_y + 1]$, there are $\Theta((z - \beta_y) (\beta_y + 1 + \alpha - z) N^2/k^2)$ choices of $\sigma \in \M_d(\X)$ which can produce $z$ with a constant weight associated with class $y$ (consider our partitioning of the space), and so as before we get that $\xi_y(z) = \Omega(\xi_{y + 1}(z))$. However, there are also the same order of choices of $\sigma \in \M_d(\X)$ which can produce $z$ with a constant weight associated with class $y + 1$, so we also get $\xi_{y + 1}(z) = \Omega(\xi_y(z))$. Thus, it follows that $g$ satisfies $\SM^y(g(z)) \in [\Theta(1), 1 - \Theta(1)]$ for $z \in [\beta_y + \alpha, \beta_y + 1] \cap \X_{\mix, d}$. 

    Based on the above, we see that $H_{\pi}(\hat{\pi} \Ind_{\X_{\mix, 1}}) = \Theta(1)$ (where we recall that $H_{\pi}$ is defined as in Equation \eqref{eqn:entropydef}). Furthermore, since $\SM^y(g(x)) \leq 1 - \Omega(1/\poly(k))$ on $\supp(\pi_X) \setminus \X_{\mix, 1}$, we have $H_{\pi}(\hat{\pi} \Ind_{\supp(\pi_X) \setminus \X_{\mix, 1}}) = O(\log(k) / k)$, from which the overall result follows.
\end{proof}

Before proving Theorem \ref{generalmix}, we need to formally state the necessary assumptions alluded to in the main text. Once again, these generalize the core facets of the simple data distribution from Definition \ref{simpledist}.

\begin{restatable}{assumption}{mixdist}\label{mixdist}
    In what follows, $\X$ is understood to be $N$ i.i.d. draws from $\pi$ and $\X_{\mix, d}$ is obtained from $\X$ as defined above and in Section \ref{mixup}. With this in mind, we restrict the class of distributions $\pi$ from Definition \ref{generaldist} such that:
    \begin{enumerate}
        \item (Non-overlapping classes are separated) For $y, s \in [k]$, if $\pi_X(\supp(\pi_y) \cap \supp(\pi_s)) = 0$, then $d(\supp(\pi_y), \supp(\pi_s)) = \omega(1)$. Additionally, $\supp(\pi_y) = \Theta(1)$ for all $y \in [k]$.
        \item (Mixed points cover support) With probability at least $1 - k^3 \exp(-\Omega(N/k^3))$, $\pi_X(\supp(\pi_X) \setminus \X_{\mix, d}) = O(1/k)$.
        \item (Mixed points are balanced across overlapping classes) Let us define for $y \in [k]$ and $z \in \R^d$:
        \begin{align*}
            \M_d^y(\X, z) = \bigg\{\sigma \in \M_d(\X):\ &\exists \lambda \in \supp(\D_{\lambda, d}) \ \bigg\vert \ z = \sum_{j \in [d + 1]} \lambda_j x_{\sigma_j} \\
            &\text{and} \quad \sum_{j: y_{\sigma_j} = y} \lambda_j = \Theta(1)\bigg\} \numberthis \label{eqn:ymset}
        \end{align*}
        Then with probability at least $1 - k^3 \exp(-\Omega(N/k^3))$, for every $y \in [k]$ and $z \in \X_{\mix, d} \cap \supp(\pi_y)$ we have that $\abs{\M_d^y(\X, z)} = \Omega(f(z, \alpha) N^{d + 1}/k^{d + 1})$ for some function $f$, or we have for every $\sigma \in \M_d(\X)$ such that $z = \sum_{j \in [d + 1]} \lambda_j x_{\sigma_j}$ with $\lambda \in \supp(\D_{\lambda, d})$:
        \begin{align*}
            \sum_{j: y_{\sigma_j} = y} \lambda_j \geq \sum_{i: y_{\sigma_i} \neq y} \lambda_i \numberthis \label{eqn:lamdom}
        \end{align*}
    \end{enumerate}
\end{restatable}

The third part of Assumption \ref{mixdist} appears complicated, but it just generalizes one of the key ideas in the proof of Proposition \ref{warmuperm}: either our mixed points always fall closer to one class (i.e. they are far away from the overlapping region), or there are a lot of possible mixings that can produce the mixed point. Ideally we would have structured Assumption \ref{mixdist} to be purely in terms of the shape of $\pi$ and not in terms of the high probability behavior of the mixed points $\X_{\mix, d}$, but various edge cases in high dimensions led us to structuring the assumption this way.

With this assumption, it is straightforward to prove the generalization of Proposition \ref{warmuperm}.

\generalmix*

\begin{proof}
    Assumption \ref{mixdist} allows us to largely lift the proof of Proposition \ref{warmuperm} to higher dimensions. Indeed, from the second part of Assumption \ref{mixdist} we already begin with the fact that $\pi_X(\supp(\pi_X) \setminus \X_{\mix, d}) = O(1/k)$ with probability at least $1 - k^3 \exp(-\Omega(N/k^3))$.

    Now as before, for $y \in [k]$, consider $z \in \supp(\pi_y) \cap \X_{\mix, d}$. By the first part of Assumption \ref{mixdist} and Definition \ref{mixset}, $z$ can only be obtained by mixing points from class $y$ and/or its overlapping neighbors $s_1(y), ..., s_m(y)$. 

    If $z \notin \supp(\pi_{s_i(y)})$ for every $s_i(y)$ (i.e. it is not in the overlapping region), then by the third part of Assumption \ref{mixdist} it immediately follows that $\SM^y(g(z)) \in [\Theta(1), 1]$ (note that unlike in dimension 1 the expression for $\xi_y$ in Lemma \ref{dmixopt} is more complicated now, but due to assuming a uniform mixing density and near orthogonality of mixed points in Definition \ref{mixset} we can simplify to essentially expressions like in the 1-d case). On the other hand, if $z = \sum_{j \in [d + 1]} \lambda_j x_{\sigma_j} \in \supp(\pi_{s_i(y)})$ for some $s_i(y)$, then we cannot simultaneously satisfy:
    \begin{align*}
        \sum_{j: y_{\sigma_j} = y} \lambda_j \geq \sum_{r: y_{\sigma_r} \neq y} \lambda_r \quad \text{and} \sum_{j: y_{\sigma_j} = s_i(y)} \lambda_j \geq \sum_{r: y_{\sigma_r} \neq s_i(y)} \lambda_r \quad \numberthis \label{eqn:lamdom2}
    \end{align*}
    This implies that the other condition in the third part of Assumption \ref{mixdist} is satisfied for both $y$ and $s_i(y)$; i.e. $\abs{\M_d^y(\X, z)} = \Omega(f(z, \alpha) N^{d + 1}/k^{d + 1})$ (see Equation \eqref{eqn:ymset}). This again implies that $\SM^y(g(z)) \in [\Theta(1), 1 - \Theta(1)]$. As in the final part of the proof of Proposition \ref{warmupmix}, from this we get $H_{\pi}(\hat{\pi} \Ind_{\X_{\mix, d}}) = \Theta(1)$ and we are done from the fact that $\SM^y(g(x)) \leq 1 - \Omega(1/\poly(k))$ on $\supp(\pi_X) \setminus \X_{\mix, d}$.
\end{proof}

\section{Additional Experiment Plots}\label{addplots}

\subsection{Model Logit Behavior}\label{addlog}
To verify that the locally Lipschitz-like model behavior of Definition \ref{regularity} is reasonable in our setting, we consider the following experimental setup. For every training point in each dataset we consider, we uniformly sample 500 points (due to compute constraints) from the surfaces of hyperspheres centered at the original training point with radii ranging from 1 to 10. The cutoff radius of 10 was chosen as that was roughly the minimum distance to a neighboring training point of a different class across the considered datasets.

For all sampled points, we compute the trained model (following the training setup of Section \ref{experiments}) logit gap (to account for translation invariance) between the logit for the correct class (or the original point) and the largest logit belonging to a different class. We then compute the mean over all logit gaps at each of the different radii considered, and plot how the mean logit gap changes as we move farther away from the original training points. Results are shown in Figure \ref{fig:logitgaps}.

\begin{figure}
    \centering
    \includegraphics[scale=0.35]{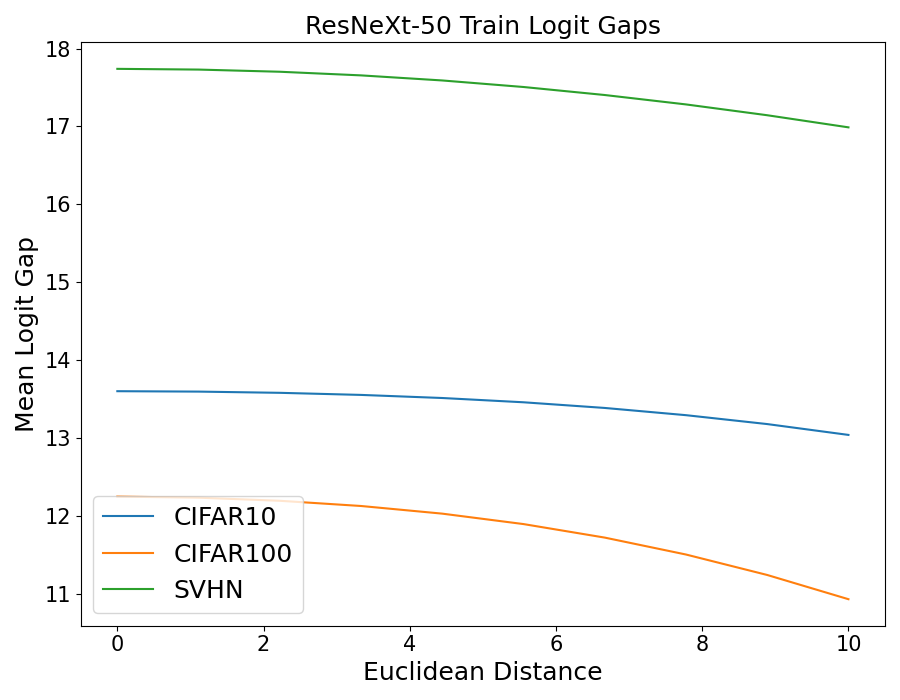}
    \caption{Mean logit gap change as a function of distance away from the original training point.}
    \label{fig:logitgaps}
\end{figure}

As can be seen in Figure \ref{fig:logitgaps}, the mean logit gaps for each dataset barely move over this radius, suggesting that assuming Lipschitz-like behavior of the logits with high probability should be reasonable in small neighborhoods. However, we acknowledge that to truly make this claim one would need to prove guarantees about the sample complexity needed to get appropriate coverage in each neighborhood. Our goal with the experiments in this section has only been to provide an initial attempt at such justification; we think that exploring how model logit/confidence behavior changes as one moves away from the training data contains many interesting challenges for future work.

\subsection{Omitted Plots for Synthetic Data}\label{addsynth}
Figures \ref{fig:first_mu} to \ref{fig:third_mu} display the confidence histograms and reliability diagrams for ERM + TS as well as all of the Mixup models (Mixup, $2$-Mixup, and $4$-Mixup) for the different settings of Gaussian test data in Section \ref{synthetic}. As was originally seen in Table \ref{tab:synthetic}, the calibration performance for ERM markedly declines as we move from $\mu = 0.25$  to $\mu = 0.05$. 

The reason for this is visible in the confidence diagrams: at the $\mu = 0.25$ level, the ERM confidences are still very bimodal, but the accuracy in the top bin is close to perfect. On the other hand, when moving to $\mu = 0.05$, the accuracy in the top bin drops significantly while the accuracy in the bottom bin increases, which corresponds to the drop in calibration performance since nearly all predictions fall in these two bins.

\begin{figure}[ht]
    \centering
    \begin{subfigure}[b]{0.45\textwidth}
        \centering
        \includegraphics[width=\textwidth]{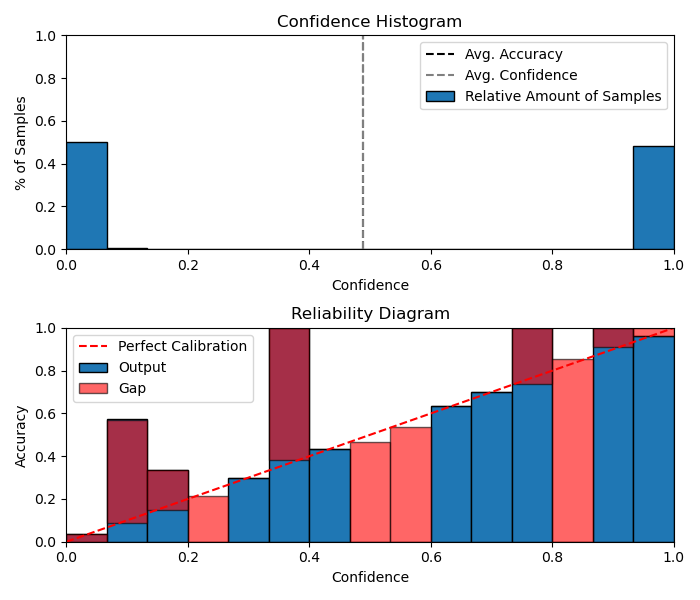}
        \caption{ERM + TS}
    \end{subfigure} \hfill
    \begin{subfigure}[b]{0.45\textwidth}
        \centering
        \includegraphics[width=\textwidth]{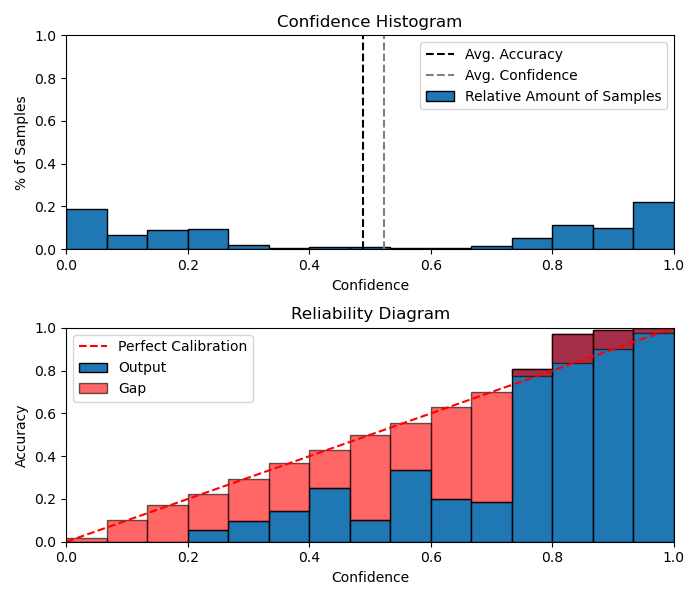}
        \caption{Mixup}
    \end{subfigure}\hfill
    \begin{subfigure}[b]{0.45\textwidth}
        \centering
        \includegraphics[width=\textwidth]{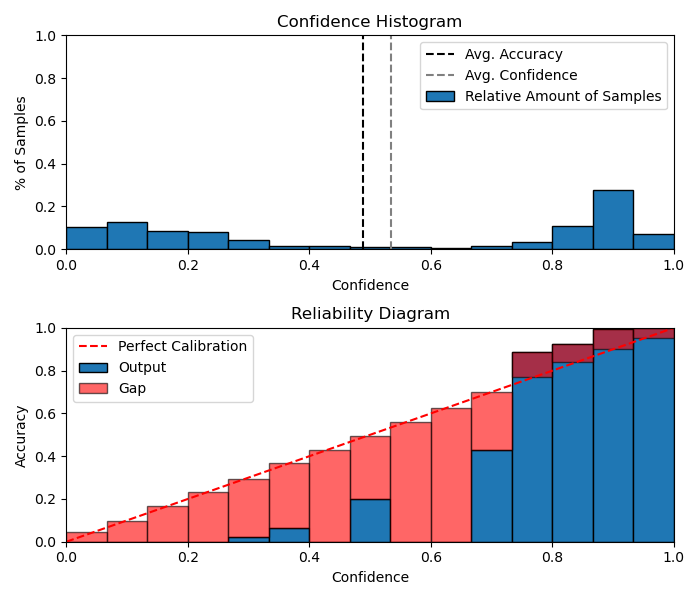}
        \caption{$2$-Mixup}
    \end{subfigure} \hfill
    \begin{subfigure}[b]{0.45\textwidth}
        \centering
        \includegraphics[width=\textwidth]{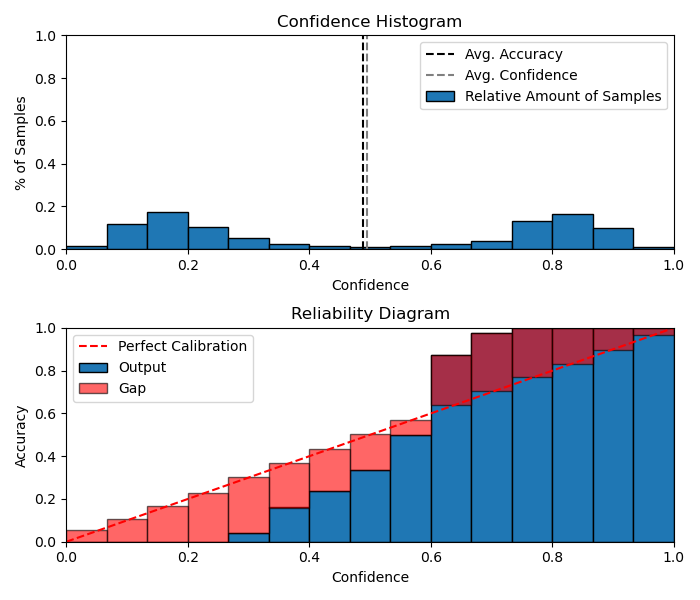}
        \caption{$4$-Mixup}
    \end{subfigure}
    \caption{Confidence histograms and reliability diagrams for ERM + TS and Mixup models on the test Gaussian data with $\mu = 0.25 * \mathbf{1}$, using 15 bins.}
    \label{fig:first_mu}
\end{figure}

\begin{figure}[ht]
    \centering
    \begin{subfigure}[b]{0.45\textwidth}
        \centering
        \includegraphics[width=\textwidth]{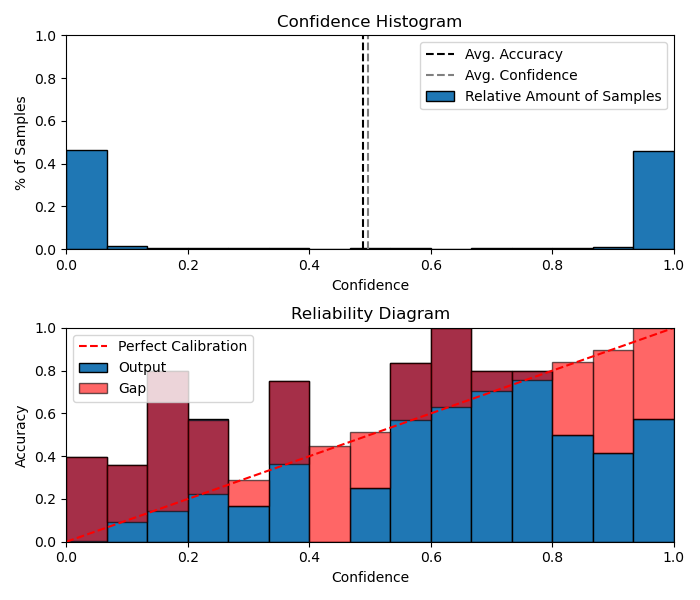}
        \caption{ERM + TS}
    \end{subfigure} \hfill
    \begin{subfigure}[b]{0.45\textwidth}
        \centering
        \includegraphics[width=\textwidth]{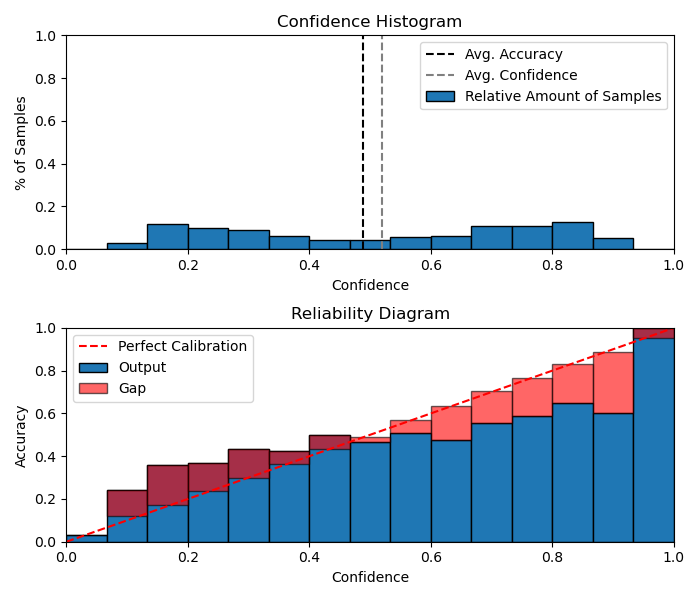}
        \caption{Mixup}
    \end{subfigure}\hfill
    \begin{subfigure}[b]{0.45\textwidth}
        \centering
        \includegraphics[width=\textwidth]{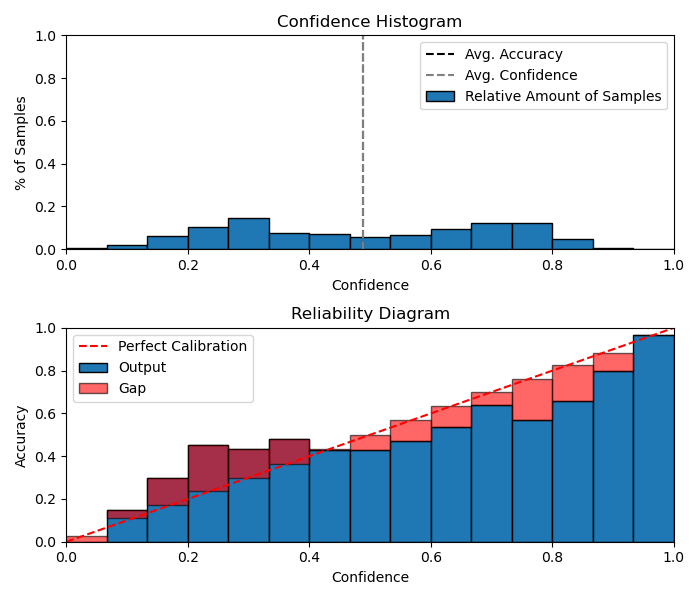}
        \caption{$2$-Mixup}
    \end{subfigure} \hfill
    \begin{subfigure}[b]{0.45\textwidth}
        \centering
        \includegraphics[width=\textwidth]{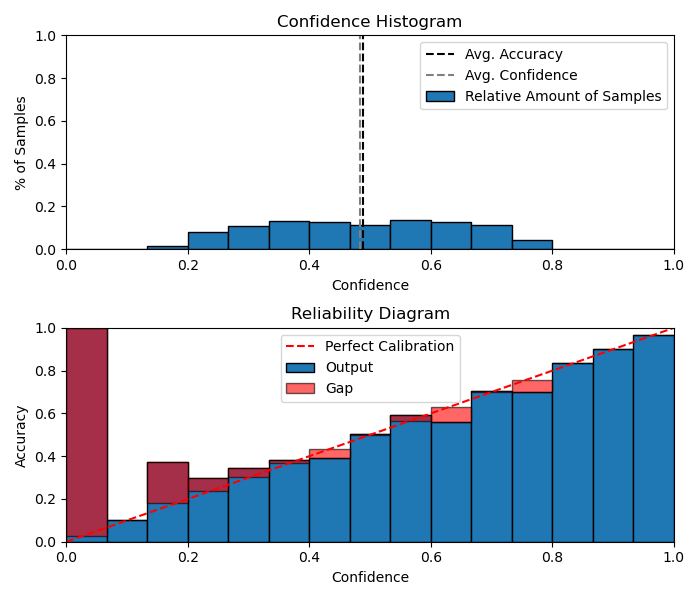}
        \caption{$4$-Mixup}
    \end{subfigure}
    \caption{Confidence histograms and reliability diagrams for ERM + TS and Mixup models on the test Gaussian data with $\mu = 0.05 * \mathbf{1}$, using 15 bins.}
    \label{fig:second_mu}
\end{figure}

\begin{figure}[ht]
    \centering
    \begin{subfigure}[b]{0.45\textwidth}
        \centering
        \includegraphics[width=\textwidth]{Plots/Gaussians/erm_ts_mu_0.01.png}
        \caption{ERM + TS}
    \end{subfigure} \hfill
    \begin{subfigure}[b]{0.45\textwidth}
        \centering
        \includegraphics[width=\textwidth]{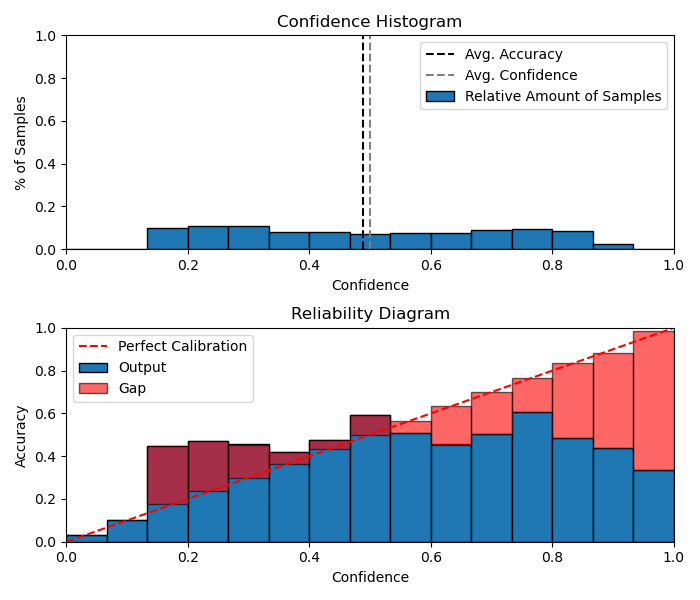}
        \caption{Mixup}
    \end{subfigure}\hfill
    \begin{subfigure}[b]{0.45\textwidth}
        \centering
        \includegraphics[width=\textwidth]{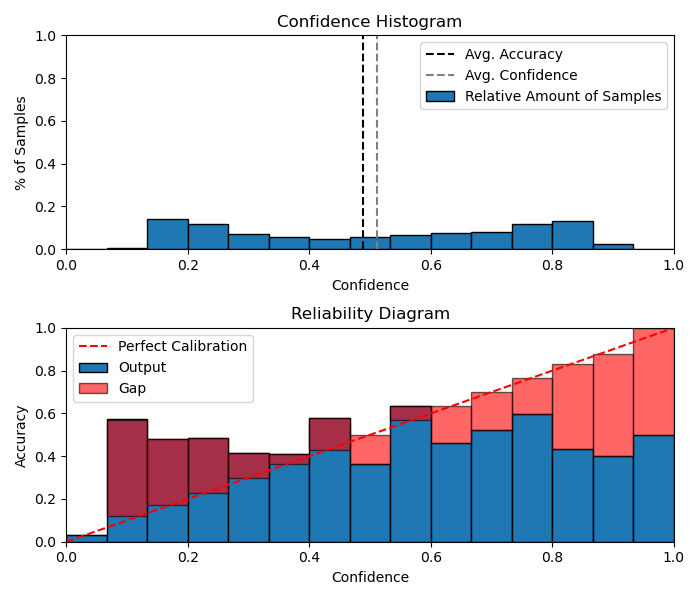}
        \caption{$2$-Mixup}
    \end{subfigure} \hfill
    \begin{subfigure}[b]{0.45\textwidth}
        \centering
        \includegraphics[width=\textwidth]{Plots/Gaussians/mix_5_mu_0.01.png}
        \caption{$4$-Mixup}
    \end{subfigure}
    \caption{Confidence histograms and reliability diagrams for ERM + TS and Mixup models on the test Gaussian data with $\mu = 0.01 * \mathbf{1}$, using 15 bins.}
    \label{fig:third_mu}
\end{figure}

\subsection{Omitted Plots and Tables for Image Classification Benchmarks}\label{addimg}
Tables \ref{tab:imageece} and \ref{tab:imageace} correspond to the ECE and ACE versions of \ref{tab:imageclass}; we find as mentioned in the main text that Mixup dominates in ECE and ACE as well. Unlike in the case of Gaussian data, here we see that both the ECE and ACE monotonically increase with label noise for the ERM + TS models (the only exception being SVHN at the 25\% label noise level, for which the mean ACE decreases but remains within 1 standard deviation of the ACE for the 0\% noise level). For Mixup, we find that there are a few additional instances in which mean ECE and ACE seemingly improve with increasing label noise (CIFAR-100 and SVHN), but again in these instances the performances are roughly within the one standard deviation error bounds of the lower noise levels. Furthermore, we also remark that is is possible for NLL to get worse (as seen in Table \ref{tab:imageclass}) while ECE/ACE improve. As an example, one can consider a uniformly random classifier - when classes are roughly balanced, this classifier has near-zero ECE despite having poor NLL performance.

Figures \ref{fig:cifar10noise} to \ref{fig:svhnnoise} contain the confidence histograms and reliability diagrams for the ERM + TS and Mixup models across CIFAR-10, CIFAR-100, and SVHN. Unlike in the case of Gaussian test data, we find non-trivial gaps in test accuracy between the ERM and Mixup models on CIFAR-10 and CIFAR-100, which certainly contributes to some of the gap in reported negative log-likelihood. However, we can see in Figure \ref{fig:svhnnoise} that ERM actually outperforms Mixup in terms of test accuracy in this case (across different label noise settings), and still suffers a massive gap in NLL. Additionally, the confidence histograms are telling, as we once again find that the ERM + TS predictions cluster heavily in the largest probability bin, which is again the cause for poor calibration performance.

\begin{table}[h]
    \centering
    \begin{tabular}{|c|c|c|c|}
         \hline
         Dataset & Label Noise & ERM + TS (ECE) & Mixup (ECE)\\
         \hline
         \multirow{3}{5em}{CIFAR-10} & 0\% & 0.2259 $\pm 0.0057$ & \textbf{0.0709} $\pm 0.0103$ \\
         & 25\% & 0.3540 $\pm 0.0194$ & \textbf{0.1055} $\pm 0.0200$ \\
         & 50\% & 0.5540	$\pm 0.0111$ & \textbf{0.1760} $\pm 0.0569$ \\
         \hline
         \multirow{3}{5em}{CIFAR-100} & 0\% & 0.4370 $\pm 0.0413$ & \textbf{0.0811} $\pm 0.0175$ \\
         & 25\% & 0.4890 $\pm 0.0164$ & \textbf{0.0634} $\pm 0.0159$ \\
         & 50\% & 0.5846 $\pm 0.0162$ & \textbf{0.0649} $\pm 0.0251$ \\
         \hline
         \multirow{3}{5em}{SVHN} & 0\% & \textbf{0.0696} $\pm 0.0051$ & 0.0918 $\pm 0.0083$ \\
         & 25\% & 0.2070 $\pm 0.0227$ & \textbf{0.0892} $\pm 0.0330$ \\
         & 50\% & 0.4799 $\pm 0.0175$ & \textbf{0.1314} $\pm 0.0255$ \\
         \hline
    \end{tabular}
    \caption{Mean ECE over 5 runs with 1 standard deviation error bounds on image classification benchmark test data with varying levels of label noise.}
    \label{tab:imageece}
\end{table}

\begin{table}[h]
    \centering
    \begin{tabular}{|c|c|c|c|}
         \hline
         Dataset & Label Noise & ERM + TS (ACE) & Mixup (ACE)\\
         \hline
         \multirow{3}{5em}{CIFAR-10} & 0\% & 0.2562 $\pm 0.0188$ & \textbf{0.1012} $\pm 0.0132$ \\
         & 25\% & 0.2909 $\pm 0.0108$ & \textbf{0.1216} $\pm 0.0327$ \\
         & 50\% & 0.3691 $\pm 0.0114$ & \textbf{0.2060} $\pm 0.0471$ \\
         \hline
         \multirow{3}{5em}{CIFAR-100} & 0\% & 0.3702 $\pm 0.0250$ & \textbf{0.1004} $\pm 0.0197$ \\
         & 25\% & 0.3992 $\pm 0.0193$ & \textbf{0.0986} $\pm 0.0245$ \\
         & 50\% & 0.4608 $\pm 0.0218$ & \textbf{0.1274} $\pm 0.0550$ \\
         \hline
         \multirow{3}{5em}{SVHN} & 0\% & 0.2436 $\pm 0.0409$ & \textbf{0.0643} $\pm 0.0091$ \\
         & 25\% & 0.2049 $\pm 0.0189$ & \textbf{0.0937} $\pm 0.0139$ \\
         & 50\% & 0.2851 $\pm 0.0108$ & \textbf{0.1880} $\pm 0.0277$ \\
         \hline
    \end{tabular}
    \caption{Mean ACE over 5 runs with 1 standard deviation error bounds on image classification benchmark test data with varying levels of label noise.}
    \label{tab:imageace}
\end{table}

\begin{table}[ht]
    \centering
    \begin{tabular}{|c|c|c|c|c|c|}
         \hline
         Dataset & Label Noise & ERM & ERM + TS & Mixup & Mixup + TS \\
         \hline
         \multirow{3}{5em}{CIFAR-10} & 0\% & 2.21 $\pm 0.20$ & 2.01 $\pm 0.18$ & 0.82 $\pm 0.04$ & \textbf{0.82} $\pm 0.04$ \\
         & 25\% & 3.53 $\pm 0.23$ & 3.21 $\pm 0.20$ & 1.31 $\pm 0.03$ & \textbf{1.30} $\pm 0.03$ \\
         & 50\% & 6.57 $\pm 0.81$ & 5.95 $\pm 0.72$ & 1.77 $\pm 0.08$ & \textbf{1.76} $\pm 0.07$ \\
         \hline
         \multirow{3}{5em}{CIFAR-100} & 0\% & 6.02 $\pm 0.92$ & 5.50 $\pm 0.82$ & 2.46 $\pm 0.05$ & \textbf{2.45} $\pm 0.05$ \\
         & 25\% & 6.97 $\pm 0.45$ & 6.38 $\pm 0.41$ & 2.98 $\pm 0.10$ & \textbf{2.98} $\pm 0.10$ \\
         & 50\% & 8.78 $\pm 0.67$ & 8.02 $\pm 0.60$ & 3.54 $\pm 0.04$ & \textbf{3.54} $\pm 0.04$ \\
         \hline
         \multirow{3}{5em}{SVHN} & 0\% & 0.76 $\pm 0.13$ & 0.69 $\pm 0.12$ & 0.34 $\pm 0.04$ & \textbf{0.33} $\pm 0.04$ \\
         & 25\% & 1.84 $\pm 0.27$ & 1.68 $\pm 0.24$ & 0.86 $\pm 0.04$ & \textbf{0.86} $\pm 0.04$ \\
         & 50\% & 4.61 $\pm 0.41$ & 4.18 $\pm 0.36$ & 1.50 $\pm 0.04$ & \textbf{1.50} $\pm 0.04$ \\
         \hline
    \end{tabular}
    \caption{Mean NLL (rounded to the nearest hundredth) over 5 runs with 1 standard deviation error bounds on image classification benchmarks for TS and non-TS ERM and Mixup.}
    \label{tab:tsimpactnll}
\end{table}

\begin{table}[ht]
    \centering
    \begin{tabular}{|c|c|c|c|c|c|}
         \hline
         Dataset & Label Noise & ERM & ERM + TS & Mixup & Mixup + TS \\
         \hline
         \multirow{3}{5em}{CIFAR-10} & 0\% & 23.2 $\pm 1.4$ & 22.6 $\pm 0.6$ & 7.1 $\pm 1.0$ & \textbf{6.7} $\pm 0.5$ \\
         & 25\% & 36.4 $\pm 2.0$ & 35.4 $\pm 1.9$ & 10.6 $\pm 2.0$ & \textbf{8.9} $\pm 1.6$ \\
         & 50\% & 56.4 $\pm 1.0$ & 55.4 $\pm 1.1$ & 17.6 $\pm 5.7$ & \textbf{16.4} $\pm 5.1$ \\
         \hline
         \multirow{3}{5em}{CIFAR-100} & 0\% & 45.5 $\pm 4.1$ & 43.7 $\pm 4.1$ & 8.1 $\pm 1.2$ & \textbf{6.8} $\pm 1.3$ \\
         & 25\% & 51.2 $\pm 1.6$ & 48.9 $\pm 1.6$ & \textbf{6.3} $\pm 1.6$ & 6.4 $\pm 1.5$ \\
         & 50\% & 61.0 $\pm 1.6$ & 58.5 $\pm 1.6$ & 6.5 $\pm 2.5$ & \textbf{6.2} $\pm 2.0$ \\
         \hline
         \multirow{3}{5em}{SVHN} & 0\% & 7.1 $\pm 0.5$ & \textbf{7.0} $\pm 0.5$ & 9.2 $\pm 0.8$ & 7.8 $\pm 0.6$ \\
         & 25\% & 21.4 $\pm 2.2$ & 20.7 $\pm 2.3$ & \textbf{8.9} $\pm 3.3$ & 9.5 $\pm 3.6$ \\
         & 50\% & 48.8 $\pm 1.7$ & 48.0 $\pm 1.8$ & 13.1 $\pm 2.6$ & \textbf{12.0} $\pm 2.3$ \\
         \hline
    \end{tabular}
    \caption{Mean ECE (multiplied by 100 and rounded to the nearest tenth) over 5 runs with 1 standard deviation error bounds on image classification benchmarks for TS and non-TS ERM and Mixup.}
    \label{tab:tsimpactece}
\end{table}

\begin{table}[ht]
    \centering
    \begin{tabular}{|c|c|c|c|c|c|}
         \hline
         Dataset & Label Noise & ERM & ERM + TS & Mixup & Mixup + TS \\
         \hline
         \multirow{3}{5em}{CIFAR-10} & 0\% & 27.1 $\pm 1.4$ & 25.6 $\pm 1.9$ & \textbf{10.1} $\pm 1.3$ & 10.2 $\pm 1.1$ \\
         & 25\% & 29.7 $\pm 0.9$ & 29.1 $\pm 1.1$ & 12.2 $\pm 3.3$ & \textbf{9.3} $\pm 1.8$ \\
         & 50\% & 37.4 $\pm 0.8$ & 36.9 $\pm 1.1$ & \textbf{20.6} $\pm 4.7$ & 21.0 $\pm 3.9$ \\
         \hline
         \multirow{3}{5em}{CIFAR-100} & 0\% & 38.3 $\pm 2.3$ & 37.0 $\pm 2.5$ & 10.0 $\pm 2.0$ & \textbf{8.2} $\pm 1.5$ \\
         & 25\% & 41.2 $\pm 1.2$ & 39.9 $\pm 1.9$ & \textbf{9.9} $\pm 2.5$ & 10.0 $\pm 2.2$ \\
         & 50\% & 47.1 $\pm 1.4$ & 46.1 $\pm 2.2$ & 12.7 $\pm 5.5$ & \textbf{11.8} $\pm 3.2$ \\
         \hline
         \multirow{3}{5em}{SVHN} & 0\% & 24.3 $\pm 4.1$ & 24.4 $\pm 4.1$ & \textbf{6.4} $\pm 0.9$ & 7.4 $\pm 2.4$ \\
         & 25\% & 21.7 $\pm 1.8$ & 20.5 $\pm 1.9$ & \textbf{9.4} $\pm 1.4$ & 9.9 $\pm 1.2$ \\
         & 50\% & 28.8 $\pm 1.0$ & 28.5 $\pm 1.5$ & \textbf{18.8} $\pm 2.8$ & 19.0 $\pm 3.5$ \\
         \hline
    \end{tabular}
    \caption{Mean ACE (multiplied by 100 and rounded to the nearest tenth) over 5 runs with 1 standard deviation error bounds on image classification benchmarks for TS and non-TS ERM and Mixup.}
    \label{tab:tsimpactace}
\end{table}

\begin{figure}[ht]
    \centering
    \begin{subfigure}[b]{0.45\textwidth}
        \centering
        \includegraphics[width=\textwidth]{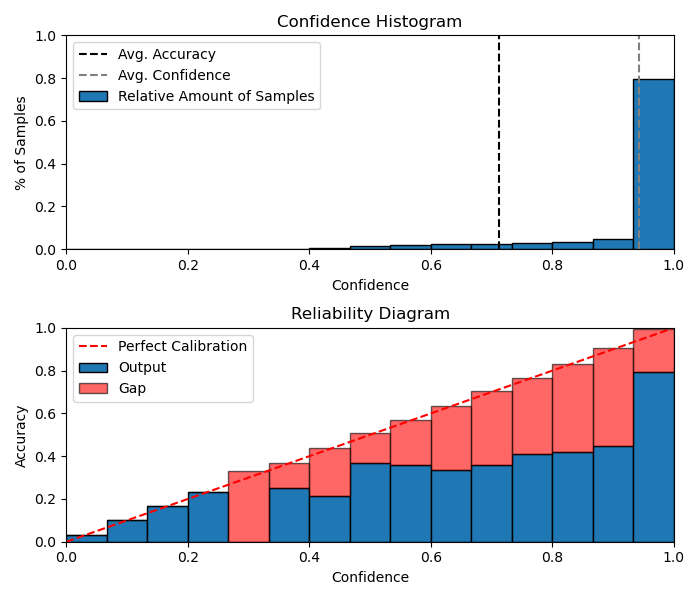}
        \caption{0\% Label Noise, ERM + TS}
    \end{subfigure} \hfill
    \begin{subfigure}[b]{0.45\textwidth}
        \centering
        \includegraphics[width=\textwidth]{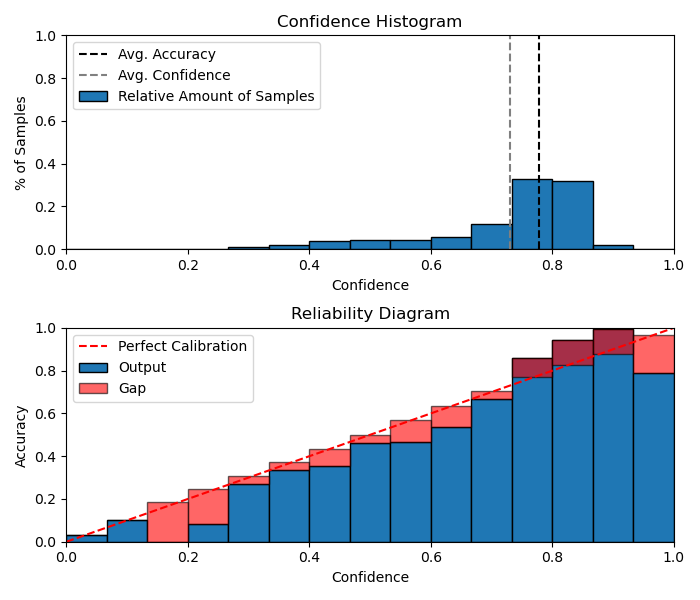}
        \caption{0\% Label Noise, Mixup}
    \end{subfigure}\hfill
    \begin{subfigure}[b]{0.45\textwidth}
        \centering
        \includegraphics[width=\textwidth]{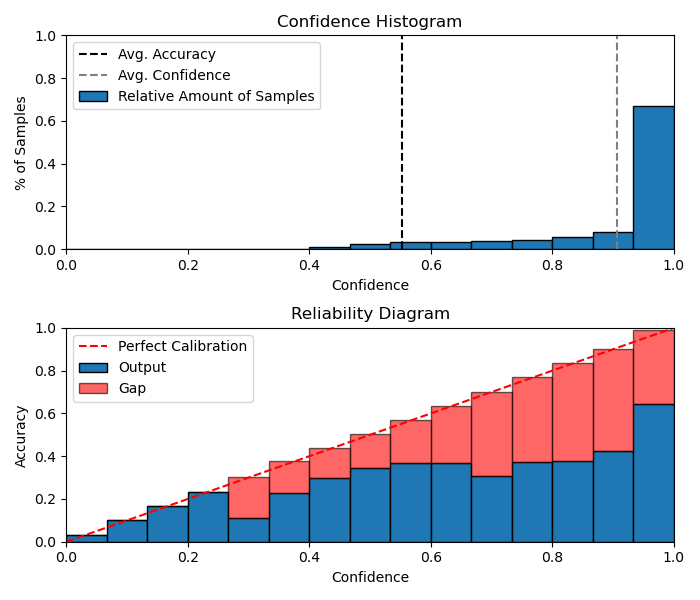}
        \caption{25\% Label Noise, ERM + TS}
    \end{subfigure} \hfill
    \begin{subfigure}[b]{0.45\textwidth}
        \centering
        \includegraphics[width=\textwidth]{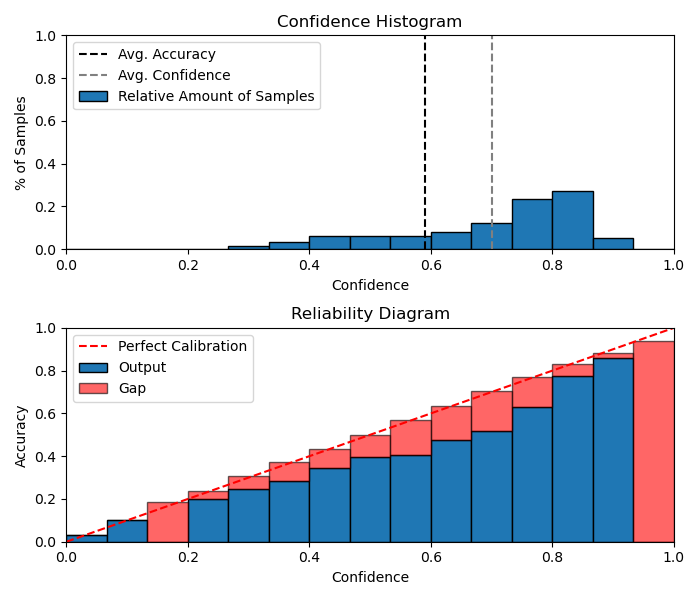}
        \caption{25\% Label Noise, Mixup}
    \end{subfigure}\hfill
    \begin{subfigure}[b]{0.45\textwidth}
        \centering
        \includegraphics[width=\textwidth]{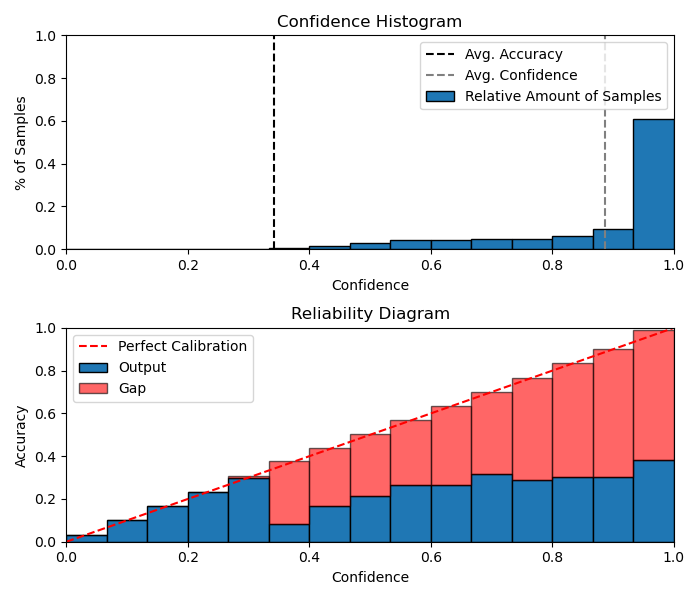}
        \caption{50\% Label Noise, ERM + TS}
    \end{subfigure} \hfill
    \begin{subfigure}[b]{0.45\textwidth}
        \centering
        \includegraphics[width=\textwidth]{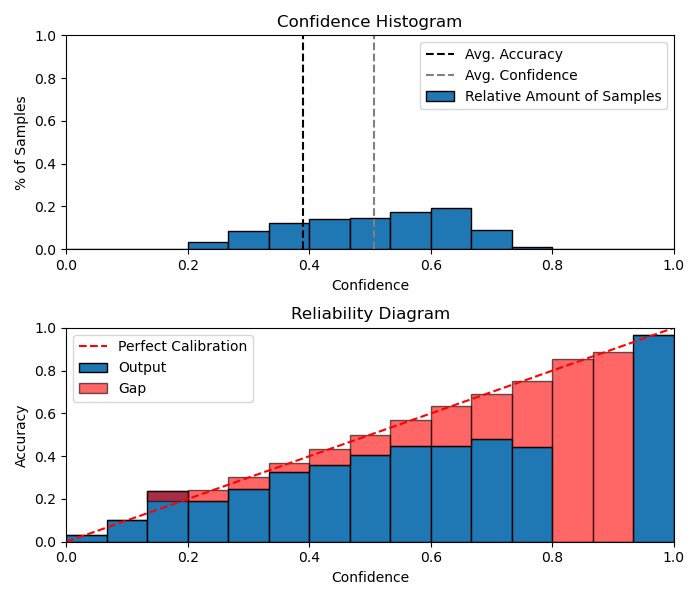}
        \caption{50\% Label Noise, Mixup}
    \end{subfigure}
    \caption{Confidence histograms and reliability diagrams for ERM + TS and Mixup models on CIFAR-10 at different levels of label noise.}
    \label{fig:cifar10noise}
\end{figure}

\begin{figure}[ht]
    \centering
    \begin{subfigure}[b]{0.45\textwidth}
        \centering
        \includegraphics[width=\textwidth]{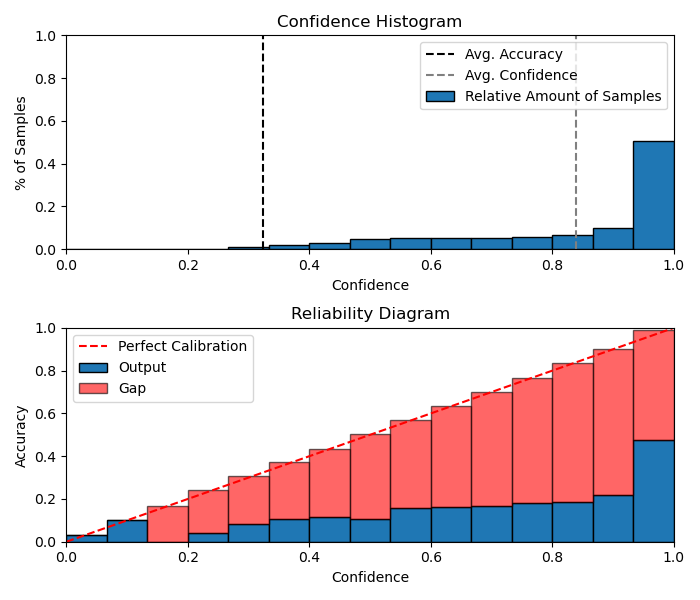}
        \caption{0\% Label Noise, ERM + TS}
    \end{subfigure} \hfill
    \begin{subfigure}[b]{0.45\textwidth}
        \centering
        \includegraphics[width=\textwidth]{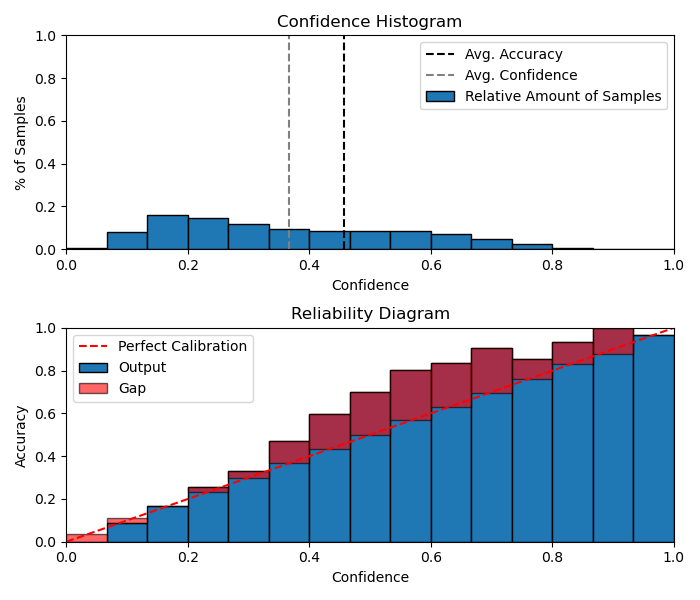}
        \caption{0\% Label Noise, Mixup}
    \end{subfigure}\hfill
    \begin{subfigure}[b]{0.45\textwidth}
        \centering
        \includegraphics[width=\textwidth]{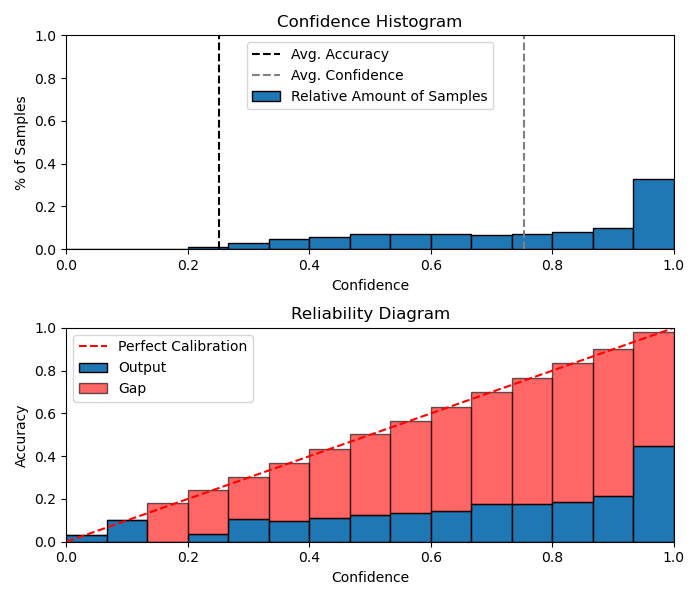}
        \caption{25\% Label Noise, ERM + TS}
    \end{subfigure} \hfill
    \begin{subfigure}[b]{0.45\textwidth}
        \centering
        \includegraphics[width=\textwidth]{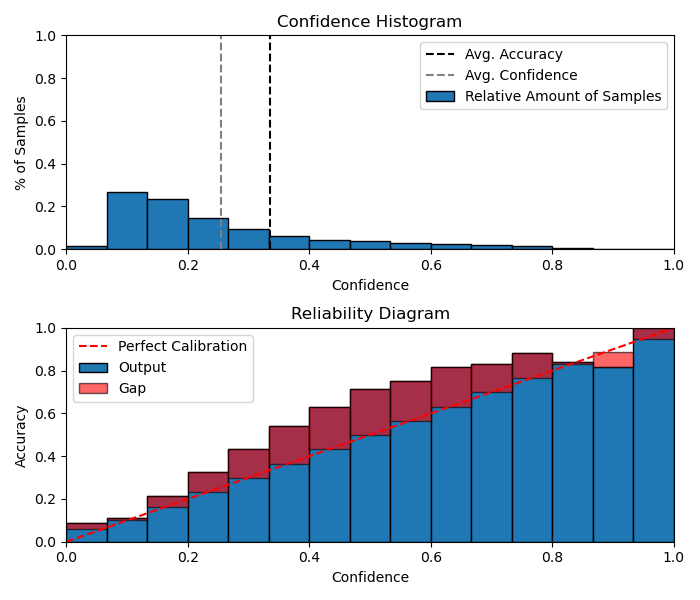}
        \caption{25\% Label Noise, Mixup}
    \end{subfigure}\hfill
    \begin{subfigure}[b]{0.45\textwidth}
        \centering
        \includegraphics[width=\textwidth]{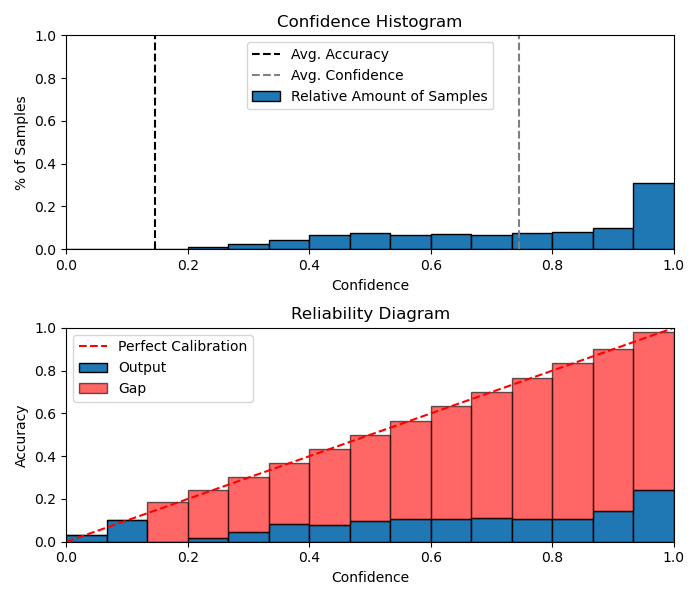}
        \caption{50\% Label Noise, ERM + TS}
    \end{subfigure} \hfill
    \begin{subfigure}[b]{0.45\textwidth}
        \centering
        \includegraphics[width=\textwidth]{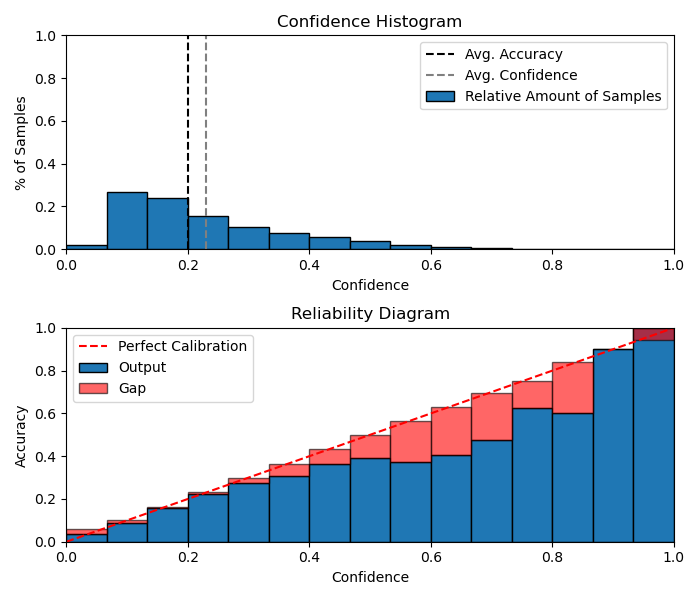}
        \caption{50\% Label Noise, Mixup}
    \end{subfigure}
    \caption{Confidence histograms and reliability diagrams for ERM + TS and Mixup models on CIFAR-100 at different levels of label noise.}
    \label{fig:cifar100noise}
\end{figure}

\begin{figure}[ht]
    \centering
    \begin{subfigure}[b]{0.45\textwidth}
        \centering
        \includegraphics[width=\textwidth]{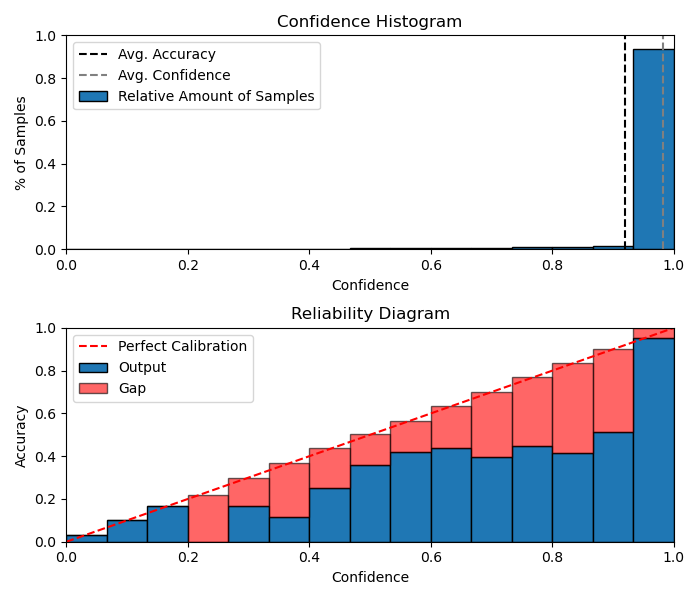}
        \caption{0\% Label Noise, ERM + TS}
    \end{subfigure} \hfill
    \begin{subfigure}[b]{0.45\textwidth}
        \centering
        \includegraphics[width=\textwidth]{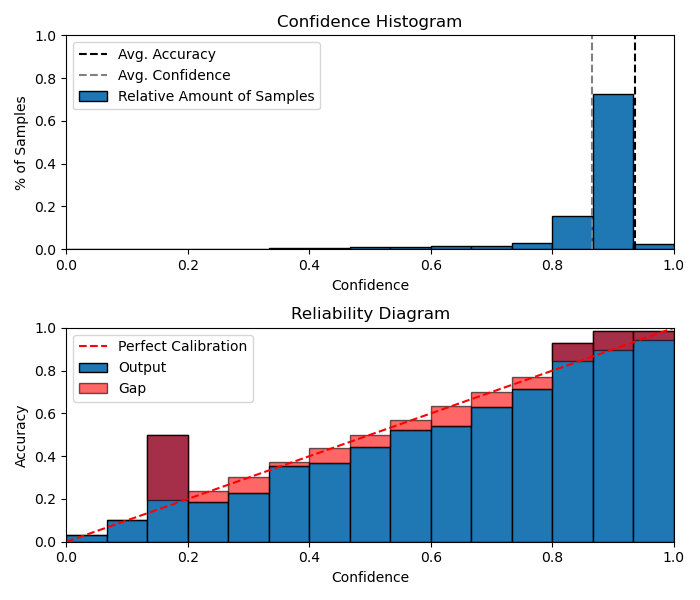}
        \caption{0\% Label Noise, Mixup}
    \end{subfigure}\hfill
    \begin{subfigure}[b]{0.45\textwidth}
        \centering
        \includegraphics[width=\textwidth]{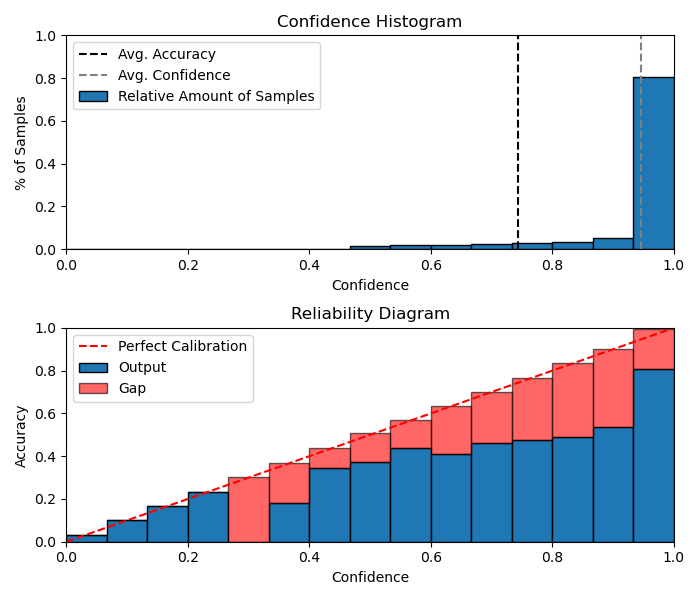}
        \caption{25\% Label Noise, ERM + TS}
    \end{subfigure} \hfill
    \begin{subfigure}[b]{0.45\textwidth}
        \centering
        \includegraphics[width=\textwidth]{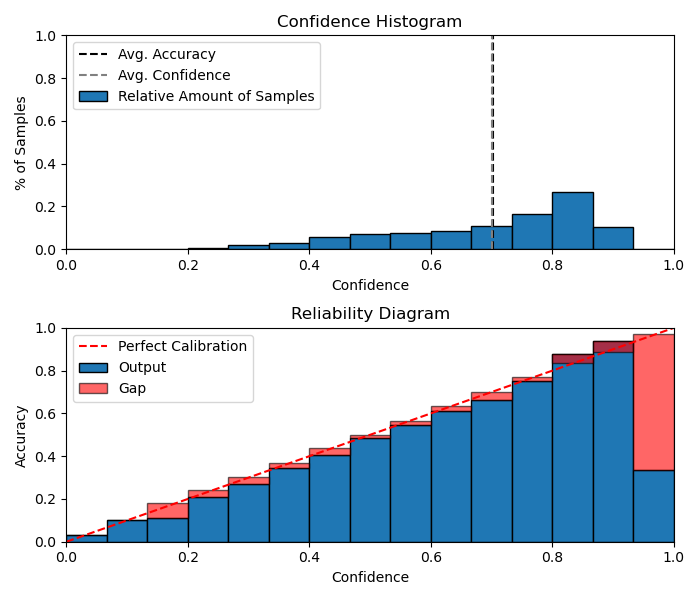}
        \caption{25\% Label Noise, Mixup}
    \end{subfigure}\hfill
    \begin{subfigure}[b]{0.45\textwidth}
        \centering
        \includegraphics[width=\textwidth]{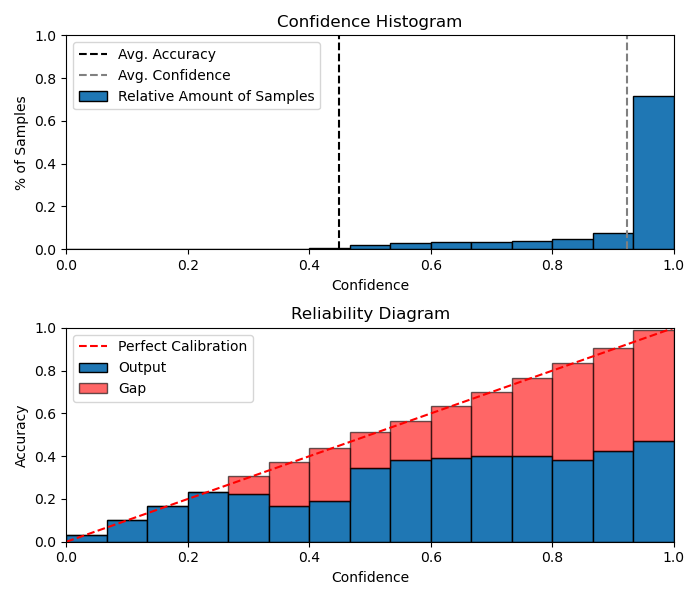}
        \caption{50\% Label Noise, ERM + TS}
    \end{subfigure} \hfill
    \begin{subfigure}[b]{0.45\textwidth}
        \centering
        \includegraphics[width=\textwidth]{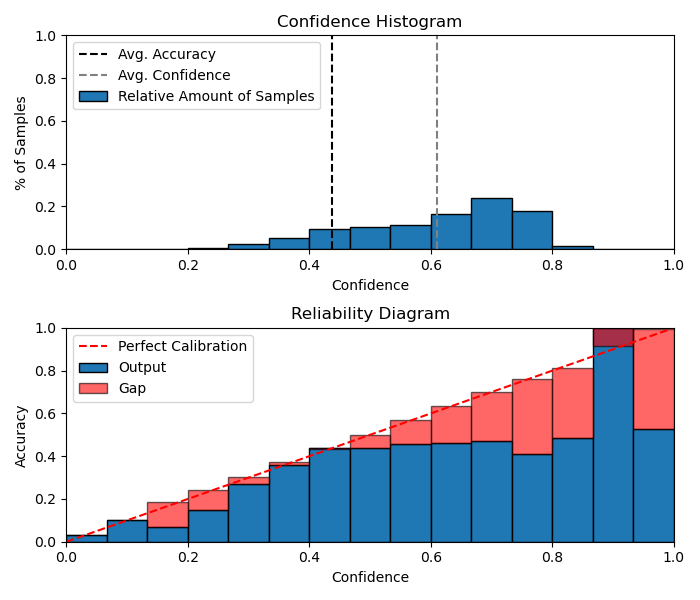}
        \caption{50\% Label Noise, Mixup}
    \end{subfigure}
    \caption{Confidence histograms and reliability diagrams for ERM + TS and Mixup models on SVHN at different levels of label noise.}
    \label{fig:svhnnoise}
\end{figure}

\subsection{Analysis of the Impact of Temperature Scaling}\label{tempscalingimpact}
Tables \ref{tab:tsimpactnll} to \ref{tab:tsimpactace} show the relative performance gains in terms of NLL, ECE, and ACE respectively due to temperature scaling for both ERM and Mixup on CIFAR-10, CIFAR-100, and SVHN. In the interest of space, we report the NLL results rounded to the nearest hundredth, and the ECE/ACE results rounded to the nearest tenth after multiplying by a factor of 100.

The main observations we make are that temperature scaling can make a significant difference for the ERM models, while it makes very little difference for the Mixup models. Furthermore, as one would expect, the greatest benefit is seen with respect to NLL, which is exactly the objective on which the temperature scaling parameter $T$ is fit. For ECE and ACE, we find that temperature scaling does not make a notable difference in our setup, as in most cases the post-TS performance is not outside of the one standard deviation error bounds of the pre-TS performance, although for ERM the mean ECE/ACE is almost always improved by temperature scaling. 

This is perhaps not super surprising in light of the reliability diagram and confidence histogram visualizations of Figures \ref{fig:cifar10noise} to \ref{fig:svhnnoise}. For ERM, we would need a large choice of $T$ to make a substantial difference in the ECE/ACE performance due to the gap between the top-class logit/probability and the other classes, but this is of course not optimal when optimizing $T$ using the NLL objective. For Mixup, the pre-TS probabilities are already much less spiky (leading to good ECE/ACE because classes are roughly balanced in the datasets we consider), and they change negligibly after temperature scaling. We also note that while there are instances in which the post-TS Mixup ECE/ACE performance is worse than the pre-TS performance, in these cases the change in ECE/ACE is virtually nothing, and falls well within the one standard deviation error bounds; again, this is because the post-TS probabilities have not changed by much. This stability of Mixup performance under temperature scaling is a possibly interesting direction for future exploration.

We also point out that while in our setting the post-TS ECE/ACE results are not substantially different, it has been observed in prior work \citep{guocal2017} that temperature scaling can improve ECE significantly. We reconcile this with our results by noting that our experimental setup treats the case of ERM without additional regularization and trained to the interpolation regime of zero training error (since this is the focus of our theory). In prior work, the models considered were trained with several additional regularizations (data augmentation, weight decay, etc.), which will certainly have a notable impact on calibration performance. 

Lastly, the results of Tables \ref{tab:tsimpactnll} to \ref{tab:tsimpactace} also further corroborate our main theoretical ideas: even though temperature scaling can lead to improvements for the ERM models, the gap between ERM calibration performance and Mixup performance simply cannot be closed by temperature scaling alone.


\end{document}